\newtheorem{definition}{Definition}
\newtheorem{theorem}{Theorem}
\newtheorem{prop}{Proposition}
\newtheorem*{preuve*}{Proof}
\title{Adaptive Time Series Forecasting with Markovian Variance Switching
}
\author{
  Baptiste Abélès  \\
  Universidad Pompeu Fabra \\
  \texttt{baptistabeles@gmail.com}
   \And Joseph de Vilmarest \\
    Viking Conseil \\
  \texttt{joseph.de-vilmarest@vikingconseil.fr} \\
   \And 
  Olivier Wintemberger \\
  Sorbonne Université \\
  \texttt{olivier.wintenberger@sorbonne-universite.fr} \\
}
\begin{document}

\newcommand{\E}{\mathbb{E}}
\newcommand{\N}{\mathcal{N}}
\newcommand{\Z}{\mathbb{Z}}
\newcommand{\Q}{\mathbb{Q}}
\newcommand{\R}{\mathbb{R}}
\newcommand{\C}{\mathbb{C}}
\newcommand{\D}{\mathbb{D}}
\newcommand{\K}{\mathbb{K}}
\newcommand{\F}{\mathcal{F}}
\newcommand{\T}{\mathbb{T}}
\maketitle

\begin{abstract}
Adaptive time series forecasting is essential for prediction under regime changes.
Several classical methods assume linear Gaussian state space model (LGSSM) with variances constant in time. However, there are many real-world processes that cannot be captured by such models. We consider a state-space model with Markov switching variances.
Such dynamical systems are usually intractable because of their computational complexity increasing exponentially with time; Variational Bayes (VB) techniques have been applied to this problem. In this paper, we propose a new way of estimating variances based on online learning theory; we adapt expert aggregation methods to learn the variances over time. We apply the proposed method to synthetic data and to the problem of electricity load forecasting.
We show that this method is robust to misspecification and outperforms traditional expert aggregation.
\end{abstract}

\keywords{Time Series \and Online Learning \and Kalman Filter\and Markov Chains}

\section{Introduction}
\label{sec::intro}
Time series forecasting is a fundamental issue. A few examples are weather forecasts for farmers, sales and inventory forecasting in retail, price and cost forecasting for industrials.
In this paper, a special motivation is electricity load forecasting, a crucial task for grid operators as the production must balance the consumption in real time \cite{bunn1985comparative,hong2014global}.

State-space models have been widely employed to model the temporal behavior of data. In particular, linear Gaussian state-space models (LGSSMs) yield tractable posterior distributions obtained by the Kalman filter (KF) \cite{kalman1960new}. Many situations are filled with non linearity both in dynamics and observation equation. A vast literature has been devoted to deal with non-linear dynamical systems, e.g. the extended Kalman filter (EKF) \cite{Dur} which essentially consists to linearize the model using Taylor first-order expansion, and the unscentend Kalman filter (UKF) \cite{wan2000unscented} which is radically different and consists to apply the unscented transformation to both prediction and filtering step.

Although the inference in LGSSMs is known \cite{kalman1960new}, the Kalman filter crucially depends on the variances of the observation and state noises. The choice of these variances may be seen as the parametrization of a second-order gradient descent algorithm, once we remark the equivalence of the Kalman recursions and gradient updates \cite{ollivier2018online, de2021stochastic}. The most widely-employed setting of these variances is time-invariant \cite{fahrmeir1994multivariate,Dur,davis2016introduction}, and the natural choice of these variances is maximum-likelihood. However, constant variances mean a smooth evolution of the environment, while in some cases there are changes of regime. To tackle that problem, various methods have been introduced, often under the paradigm adaptive Kalman filter \cite{mehra1970identification, huang2017novel, huang2020slide}.

Alternatively the switching Kalman filter (SKF) \cite{murphy1998switching} considers several linear dynamic models and try to combine them with a switching mechanism (such as a Markov chain), selecting over time one among several regimes. This latter approach is of a particular interest in many real-world problems that cannot be approximated with a single regime, and where multiple behaviors are needed instead. For instance, time series forecasting during a period covering stable as well as unstable moments motivates switches between different regimes. Our application to electricity load forecasting is one of these, as we include very unstable periods such as lockdowns due to the coronavirus in 2020, during which it was best to switch to a more reactive method \cite{obst2021adaptive}. 

The complexity of these hybrid dynamical systems, \textit{i.e}. systems that combine discrete and continuous valued latent variables, increases exponentially with time. State-of-the art approaches propose to learn a switching linear dynamical system using Variational Bayesian (VB) techniques \cite{alameda2021variational}. However, VB approaches have some limits under model misspecification \cite{wang2019variational}. 

In this paper, we propose to consider the SKF as an aggregation of experts, where each expert corresponds to a different regime. We apply online learning techniques to track the best expert as in \cite{herbster1998tracking, cesa2006prediction}. After introducing various methods in Section \ref{sec::Relatew_work}, we define our algorithm in Section \ref{sec::methodology}, and we provide a regret bound in expectation. Then, in Section \ref{sec:experiments}, we conduct various experiments on synthetic data, as well as a real data set, forecasting the French national electricity load. We study well-specified and misspecified experiments and we show that this new framework is robust to misspecification and outperforms Kalman aggregation as in \cite{adjakossa2023kalman}.

\section{Background}
\label{sec::Relatew_work}
\paragraph{Notations} Given probability measures $p$ and $q$ with a common dominating measure $\mu$, we denote the Kullback-Leibler divergence as $D(p||q) := \int p \log(p/q)d\mu$. For any positive integer $t$ we denote $[t] := \{1,...,t\}$ and $\F_t := \sigma(\boldsymbol{x}_1,y_1,...,\boldsymbol{x}_t,y_t,\boldsymbol {x}_{t+1})$ is the sigma algebra generated by the pair $(\boldsymbol{x}_i,y_i)_{i=1}^t$ and $\boldsymbol{x}_{t+1}$. For some $\mu,\sigma \in \R$, the notation $\N(.|\mu,\sigma)$ refers to as the normal distribution with mean $\mu$ and variance $\sigma^2$ (we keep the same notation if $\boldsymbol{\mu}$ is a vector and $\Sigma$ is the covariance matrix associated). We denote $\Delta(\alpha,\beta,\gamma)$ as the diagonal matrix in $\R^{3\times3}$ with coefficients $\alpha,\beta,\gamma$.

Let $y_t \in \R$ be the variable of interest, $\boldsymbol{x}_t \in \R^d$ a vector of explainable variables and $\boldsymbol{\theta}_t \in \R^d$ the vector of latent variables at time $t$. The classical LGSSM writes :
\begin{align}
\label{LGSSM::var::eq_obs}
    & y_t = \boldsymbol{\theta}_ t^T\boldsymbol{x}_t + \epsilon_t, \quad \text{where } \epsilon_t \sim \mathcal{N}(0,\sigma_{t}^2), \\
\label{LGSSM::var::eq_state}& \boldsymbol{\theta}_{t+1} = \boldsymbol{\theta}_t + \nu_t, \quad \text{where } \nu_t \sim \mathcal{N}(0,Q_{t}),
\end{align}
Our goal is to estimate jointly the state and variances knowing the past that we denote $\boldsymbol{\hat{\theta}}_{t|t-1} = \E[\boldsymbol{\theta}_t|\F_{t-1}]$ and $P_{t|t-1} =\E[(\boldsymbol{\theta}_t-\boldsymbol{\hat{\theta}}_{t|t-1})(\boldsymbol{\theta}_t-\boldsymbol{\hat{\theta}}_{t|t-1})^T|\F_{t-1}]$. The well-known KF gives the exact updates if $\sigma_{t}^2$ and $Q_{t}$ are known :
  \begin{theorem}(Kalman Filter)
  \label{Kalman_update}
\begin{align*}
     &P_{t|t}= P_{t|t-1} -\frac{P_{t|t-1}\boldsymbol{x}_t\boldsymbol{x}_t^T P_{t|t-1}}{\boldsymbol{x}_t^T P_{t|t-1}\boldsymbol{x}_t + \sigma_{t}^2},  \\
   & \boldsymbol{\hat{\theta}}_{t|t} = \boldsymbol{\hat{\theta}}_{t|t-1} -\frac{P_{t|t}}{\sigma_{t}^2}\left(\boldsymbol{x}_t(\boldsymbol{\hat{\theta}}_{t|t-1}^T\boldsymbol{x}_t - y_t)\right),  \\
     & P_{t+1|t} = P_{t|t} + Q_{t}, \\ &\boldsymbol{\hat{\theta}}_{t+1|t} = \boldsymbol{\hat{\theta}}_{t|t}. 
\end{align*}
\end{theorem}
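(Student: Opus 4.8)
The plan is to exploit the fact that in a linear Gaussian model every conditional law stays Gaussian, so the filter is entirely determined by propagating a conditional mean and covariance. The induction hypothesis I would carry is that $\boldsymbol{\theta}_t \mid \F_{t-1}$ is Gaussian with mean $\boldsymbol{\hat{\theta}}_{t|t-1}$ and covariance $P_{t|t-1}$; the four displayed recursions then split naturally into a filtering (update) step producing $(\boldsymbol{\hat{\theta}}_{t|t}, P_{t|t})$ and a prediction step producing $(\boldsymbol{\hat{\theta}}_{t+1|t}, P_{t+1|t})$, the latter also re-establishing the induction hypothesis at time $t+1$.

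For the filtering step, I would first write down the joint law of $(\boldsymbol{\theta}_t, y_t)$ conditional on $\F_{t-1}$. Using the observation equation \eqref{LGSSM::var::eq_obs} and the independence of $\epsilon_t$, this joint law is Gaussian with $\E[y_t \mid \F_{t-1}] = \boldsymbol{\hat{\theta}}_{t|t-1}^T \boldsymbol{x}_t$, innovation variance $\boldsymbol{x}_t^T P_{t|t-1}\boldsymbol{x}_t + \sigma_t^2$, and cross-covariance $\mathrm{Cov}(\boldsymbol{\theta}_t, y_t \mid \F_{t-1}) = P_{t|t-1}\boldsymbol{x}_t$. Applying the Gaussian conditioning formula to incorporate $y_t$ (equivalently, since $y_t$ is scalar, a rank-one conditioning in which the innovation variance is a number, so no genuine matrix inversion is needed) immediately yields the stated expression for $P_{t|t}$ and the gain form $\boldsymbol{\hat{\theta}}_{t|t} = \boldsymbol{\hat{\theta}}_{t|t-1} + K_t(y_t - \boldsymbol{\hat{\theta}}_{t|t-1}^T\boldsymbol{x}_t)$ with $K_t = P_{t|t-1}\boldsymbol{x}_t / (\boldsymbol{x}_t^T P_{t|t-1}\boldsymbol{x}_t + \sigma_t^2)$.

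The prediction step is the easier half: since $\boldsymbol{\theta}_{t+1} = \boldsymbol{\theta}_t + \nu_t$ with $\nu_t \sim \N(0, Q_t)$ independent of $\F_t$, taking the conditional expectation gives $\boldsymbol{\hat{\theta}}_{t+1|t} = \boldsymbol{\hat{\theta}}_{t|t}$, and adding the independent noise covariance gives $P_{t+1|t} = P_{t|t} + Q_t$, which are exactly the last two displayed equations.

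The one point that needs care, and the only real algebraic obstacle, is reconciling the natural gain $K_t = P_{t|t-1}\boldsymbol{x}_t/(\boldsymbol{x}_t^T P_{t|t-1}\boldsymbol{x}_t + \sigma_t^2)$ coming out of Gaussian conditioning with the form $P_{t|t}\boldsymbol{x}_t/\sigma_t^2$ appearing in the statement. I would establish this identity by substituting the just-derived expression for $P_{t|t}$, factoring $P_{t|t-1}\boldsymbol{x}_t$ out on the right, and collapsing the scalar factor $1 - \boldsymbol{x}_t^T P_{t|t-1}\boldsymbol{x}_t/(\boldsymbol{x}_t^T P_{t|t-1}\boldsymbol{x}_t + \sigma_t^2) = \sigma_t^2/(\boldsymbol{x}_t^T P_{t|t-1}\boldsymbol{x}_t + \sigma_t^2)$, after which $\sigma_t^2$ cancels and the two gains coincide. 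The minus sign in the statement is then simply a consequence of writing the innovation in the order $(\boldsymbol{\hat{\theta}}_{t|t-1}^T\boldsymbol{x}_t - y_t)$ rather than $(y_t - \boldsymbol{\hat{\theta}}_{t|t-1}^T\boldsymbol{x}_t)$.
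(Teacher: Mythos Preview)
Your derivation is correct and is the standard route to the Kalman recursions via Gaussian conditioning. Note, however, that the paper does not actually supply a proof of this theorem: it is stated in the background section as the classical result of \cite{kalman1960new} and used as a black box thereafter. So there is no paper proof to compare against; your argument simply fills in what the paper takes for granted, and does so in the conventional way (closure of Gaussians under conditioning for the update step, independence of the process noise for the prediction step, and the rank-one algebraic identity to pass between the two equivalent gain expressions).
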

Usual choice in the literature is to assume that the variances are time-invariant \cite{Dur, davis2016introduction} and apply the Expectation-Maximisation (EM) algorithm. A second paradigm which has frequently been called Adaptive Kalman Filter (AKF) \cite{mehra1970identification} consists in estimating variances over time and applying the Kalman recursive updates using such adaptive variances. 

A more recent approach consists in finding the complete posterior of both state and variances \cite{de2021viking}.
The authors assume that $Q_t, \sigma_t^2$ are random variables and try to estimate the joint distribution of the state and the variances in a dynamical way. They assume a model on the dynamics of the variances; then, starting from a prior on $(\boldsymbol{\theta}_1,\sigma_1^2,Q^{(1)})$, they propose to recursively estimate the posterior distribution of $(\boldsymbol{\theta}_t,\sigma_t^2,Q_t)$.
However there is no natural parametric class of distributions on the joint distribution such that the posterior remains in the class considered. To handle this issue, they apply the Variational Bayesian (VB) approach \cite{vsmidl2006variational}; it consists in approximating the posterior with the best factorized form in the sense of the minimal Kullback-Leibler divergence.
We believe that the limits of such methods lie on too much degree of freedom and we aim to constrain the model.

To reduce the degree of freedom of the problem, one can assume the existence of a hidden Markov chain $(Z_t) \in [K]$ parameterizing the dynamics of the classical LGSSM. At each time $t$, $\sigma_t^2= \sigma_{Z_t}^2, Q_t = Q_{Z_t}$.
This model writes :
\begin{align}
\label{SKM::var::eq_obs}
    & y_t = \boldsymbol{\theta}_ t^T\boldsymbol{x}_t + \epsilon_t, \quad \text{where } \epsilon_t \sim \mathcal{N}(0,\sigma_{Z_t}^2), \\
    & \label{SKM::var::eq_state}\boldsymbol{\theta}_{t+1} = \boldsymbol{\theta}_t + \nu_t, \quad \text{where } \nu_t \sim \mathcal{N}(0,Q_{Z_t}), \\
    & \label{var::eq_markov}
    \mathbb{P}(Z_{t+1} = k) = \sum_{i=1}^KM_{ik} \mathbb{P}(Z_t =i),
\end{align}
where $M_{ik} = \mathbb{P}(Z_{t+1}=k|Z_t=i)$ for $i,k \in [K]$ is an entry of the transition matrix $\boldsymbol{M} \in \R^{K\times K}$.

However, the complexity of the posterior distribution of such dynamical systems grows exponentially in time, making it intractable. Indeed the posterior writes :
\begin{align*}
\label{posterior}
p(\boldsymbol{\theta}_{1:t},Z_{1:t}|\F_t)&\propto \pi(Z_1) \mathcal{N}(\boldsymbol{\theta}_1|\boldsymbol{\hat{\theta}}_1,P_1) \prod_{s=1}^t \mathcal{N}(y_s|\boldsymbol{\theta}_s^T\boldsymbol{x}_s,\sigma_{Z_s} ^2)\\
    & \prod_{s=1}^{t-1} \mathcal{N}(\boldsymbol{\theta}_{s+1}|\boldsymbol{\theta}_s,Q_{Z_{s}})M_{Z_{s+1},Z_s},
\end{align*}
where $\pi(Z_1)$ denotes the prior law on $Z_1$.
Several approaches, referred to as \textit{Generalized Pseudo Bayesian algorithm} \cite{murphy1998switching}, approximate this exponential mixture of Gaussians with a smaller mixture of Gaussians. 
Recent works use VB techniques to approximate the posterior \cite{alameda2021variational}. The main idea is to look for a tractable approximation of the posterior. If we denote $p_{\F_t}$ the posterior we look for $(q_Z^*,q_\theta^*) \in \mathcal{D}_t$ such that :
$$(q_Z^*,q_\theta^*) \in \arg \min_{(q_Z,q_\theta) \in \mathcal{D}_t} D(q_Z \times q_\theta \  || \ p_{\F_t}),$$
where $\mathcal{D}_t$ is the parametric class considered. We provide the derivation and results of this method that performs poorly in our experiments in supplementary materials. An important limitation of this method
 is the lack of robustness to misspecification \cite{wang2019variational}. 

To overcome this problem we propose a new approach to adaptive variance estimation relying on expert aggregation. Aggregation of experts is a well-established framework in machine learning \cite{cesa2006prediction,vovk1995game}. In particular, it yields excellent results in the context of electricity load forecasting \cite{adjakossa2023kalman}. Our work follows this classical setting, we consider a learner, $K$ experts and a loss function $\ell : \boldsymbol{\mathcal{E}} \times \mathcal{Y} \to \R$ where $\boldsymbol{\mathcal{E}}$ is the space of experts predictions and $\mathcal{Y}$ is the space of observation. At each time $t$, each expert $k \in [K]$ predicts $\boldsymbol{e}_{k,t} \in \boldsymbol{\mathcal{E}}$. The learner picks a distribution $\boldsymbol{p}_t$ over the set of experts' predictions, predicts $\boldsymbol{e}_t=\sum_{k=1}^K\boldsymbol{p}_{k,t}\boldsymbol{e}_{k,t} \in \boldsymbol{\mathcal{E}}$ and observes his loss $\ell_t := \ell(\boldsymbol{e}_t,y_t)$. The learner has also access to the loss of each expert at time $t$, that we denote $\ell_{k,t}:=\ell(\boldsymbol{e}_{k,t},y_t)$. The goal of the learner is to minimize his cumulative loss on sequence of length $T$. A first way to measure his performance is to compare his cumulative loss with other expert during the sequence. 
\begin{definition}
    We define the regret with respect to the expert $k$ for a sequence of length $T$ as : 
$$R_T(k) = \sum_{t=1}^T \ell_t-\ell_{k,t}$$
\end{definition}
\subsection{Exponential-Weight-Average (EWA) algorithm for Kalman Filter}
\label{subsec :: EWA}
To better understand our framework, we present the famous EWA to show how a learner can optimize his strategy to control his regret. 

\textit{We make a small remark here that is we will essentially focus on the variance of the state Equation \ref{SKM::var::eq_state}. Indeed Kalman Filter gives exactly the same update for the pair $(\sigma^2,Q)$ and $(1,Q^*)$ with $Q^* = \frac{Q}{\sigma^2}$ (refer to \cite{ba2016line}).}

Let's consider the notations of the state space model and assume each expert is a KF derived with $Q_t^*=Q^{*(k)}$ which starting from some $\boldsymbol{\hat\theta}_{0|0}$ yields the prediction $\boldsymbol{\hat{\theta}}_{t|t-1}^{(k)}$ of the state at time $t$, for $t \in [T]$. In this case, let's identify  the space of experts as the observation space , \textit{i.e} $\boldsymbol{\mathcal{E}} = \mathcal{Y}$ and each expert predicts $\boldsymbol{e}_{k,t}=\hat{y}_{k,t}$ where $\hat{y}_{k,t}=\boldsymbol{\hat{\theta}}_{t|t-1}^{(k)T}\boldsymbol{x}_t$.
We consider the quadratic loss $\ell_{k,t}=(y_t-\hat{y}_{k,t})^2$.

\begin{algorithm}
\caption{EWA Algorithm}\label{alg:EWA}
\begin{algorithmic}
\State \textbf{Set} $\boldsymbol{p}_1 = K^{-1} \mathbf{1}_{K}$ 
\For{$t\in [T]$}
\State select $\boldsymbol{p}_t$, forecast $\hat{y}_t= \sum_{k=1}^K \boldsymbol{p}_{t}(k)\hat{y}_{k,t}$
\State derive loss $\ell_t$ and $\ell_{k,t}$ for $k \in [K]$
\State for $k \in [K] $,update:
$\boldsymbol{p}_{t+1}(k) = \frac{\boldsymbol{p}_t(k)e^{-\eta \ell_{k,t}}}{\sum_{j=1}^K \boldsymbol{p}_t(j)e^{-\eta \ell_{j,t}}}$
\EndFor
\end{algorithmic}
\end{algorithm}
\newpage

A first important result is given by the following proposition. 
\begin{prop}
   Assume that the loss function $\ell$ is convex and  takes value in $[0,1]$:
   $$R_T^* \leq \frac{\log(K)}{\eta} + \frac{\eta T}{8}\,.$$
Refer to \cite{cesa2006prediction}, Theorem 2.2 for a proof. 
\end{prop}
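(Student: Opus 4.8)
The plan is to follow the classical ``weight function'' (potential) argument for exponential weights. First I would introduce the cumulative losses $L_{k,t} := \sum_{s=1}^t \ell_{k,s}$ for each expert and $L_t := \sum_{s=1}^t \ell_s$ for the learner, so that the target becomes bounding $R_T^* = L_T - \min_{k} L_{k,T}$. The central object is the unnormalized sum of weights $W_t := \sum_{k=1}^K e^{-\eta L_{k,t}}$, with $W_0 = K$. The whole argument then reduces to sandwiching the single scalar $\log(W_T/W_0)$ between a lower and an upper bound, after which the claim falls out by rearranging.

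For the lower bound I would simply discard all but the best expert in the sum: since every term is nonnegative, $W_T \geq e^{-\eta \min_k L_{k,T}}$, whence
$$\log \frac{W_T}{W_0} \geq -\eta \min_k L_{k,T} - \log K.$$
For the upper bound I would telescope over time. Writing out the ratio and recognizing the EWA weights from Algorithm \ref{alg:EWA} gives
$$\frac{W_t}{W_{t-1}} = \frac{\sum_k e^{-\eta L_{k,t-1}} e^{-\eta \ell_{k,t}}}{\sum_k e^{-\eta L_{k,t-1}}} = \sum_{k=1}^K \boldsymbol{p}_t(k)\, e^{-\eta \ell_{k,t}},$$
so each increment $\log(W_t/W_{t-1})$ is exactly the log-moment-generating function of the loss $\ell_{k,t}$ under the draw $k \sim \boldsymbol{p}_t$.

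The key step, and the main obstacle, is controlling this log-moment-generating function. Since each $\ell_{k,t} \in [0,1]$, Hoeffding's lemma yields
$$\log \sum_{k=1}^K \boldsymbol{p}_t(k)\, e^{-\eta \ell_{k,t}} \leq -\eta \sum_{k=1}^K \boldsymbol{p}_t(k)\,\ell_{k,t} + \frac{\eta^2}{8}.$$
Convexity of $\ell$ together with Jensen's inequality then gives $\ell_t = \ell\big(\sum_k \boldsymbol{p}_t(k)\boldsymbol{e}_{k,t}, y_t\big) \leq \sum_k \boldsymbol{p}_t(k)\,\ell_{k,t}$, so that $\log(W_t/W_{t-1}) \leq -\eta \ell_t + \eta^2/8$. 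Summing over $t \in [T]$ and combining with the lower bound produces
$$-\eta \min_k L_{k,T} - \log K \leq \log\frac{W_T}{W_0} \leq -\eta L_T + \frac{\eta^2 T}{8},$$
and dividing by $\eta$ after rearranging yields $R_T^* \leq \log(K)/\eta + \eta T/8$. I expect the only genuinely delicate point to be Hoeffding's lemma itself, i.e. that the cumulant-generating function of a $[0,1]$-valued random variable is at most its scaled mean plus $\eta^2/8$ (a second-order Taylor bound on the variance of a bounded variable); the telescoping, the Jensen step, and the final rearrangement are routine bookkeeping.
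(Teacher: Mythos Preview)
Your argument is correct and is precisely the potential-function proof of Theorem~2.2 in \cite{cesa2006prediction}, which the paper merely cites without reproducing. The identification $R_T^* = L_T - \min_k L_{k,T}$, the sandwich on $\log(W_T/W_0)$, Hoeffding's lemma for the per-step increment, and the Jensen step from convexity are exactly the ingredients used there, so there is nothing to add.
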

Algorithm \ref{alg:EWA} belongs to the so-called class of \textit{Static Expert algorithm} which tend  to converge to the best expert. However in our case since we assume switches; we look for aggregation algorithms able to switch from one expert to the other.

\subsection{Fixed-Share (FS) algorithm}
For this reason we should consider a more refined measure of performance.
\begin{definition}
    Let $k^T = (k_1,...,k_T)$ be a sequence of experts we can define the regret with respect to this sequence as:
    $$R_T(k^T) = \sum_{t=1}^T \ell_t - \ell_{k_t,t}$$
    and $$R_T^* = \sup_{k^T} R_T(k^T)$$
\end{definition}
A first and naïve approach would be to consider each sequence as a new expert and try to recover the best expert on the set of every possible sequence of expert using EWA algorithm starting from a prior distribution $p_1(k^T)$ over this set. However in practice we will face a problem of intractability since we need to store and update $O(K^T)$ weights to compare to $R_T(k^{T})$.   
The problem of controlling the regret with respect to sequences of experts, known as tracking the best expert, was introduced by Herbster and Warmuth \cite{herbster1998tracking}, who proposed the simple FS algorithm with good regret guarantees.
A key fact highlighted in  \cite{vovk1997derandomizing} (section 3.3) is that FS, can be interpreted as the EWA algorithm on sequences of experts under a markov prior. 

Below we present a generalization of FS named as Markov-Hedge (MH) algorithm which starting from markov chain prior $\boldsymbol{p}_1(k_1,...,k_T)=\pi(k_1)M_{k_2|k_1}...M_{k_T|k_{T-1}}$ collapses to efficient algorithm with bounded regret \cite{mourtada2017efficient}. For $t\in[T]$, $M_{k_t|k_{t-1}}$ correspond to the entry of the transition matrix $\boldsymbol{M} \in \R^{K \times K}$ of the markov chain.

 \begin{algorithm}[!h]
\caption{Markov-Hedge Algorithm}\label{alg_MH}
\begin{algorithmic}
\State \textbf{Require:} $\boldsymbol{M} = (M_{k_t|k_{t-1}}), \boldsymbol{p}_1 = K^{-1} \mathbf{1}_{K}$ 
\For{$t\in [T]$}
\State select $\boldsymbol{p}_t$, forecast $\hat{y}_t=\sum_{k=1}^K p_{k,t}\hat{y}_{k,t}$
\State derive loss $\ell_t$ and $\ell_{k,t}$ for $k \in [K]$
\State for $k \in [K] $,update:
$\boldsymbol{p}_{t+1}(k) =  \frac{\boldsymbol{p}_t(k)e^{-\eta \ell_{k,t}}}{\sum_{j=1}^K \boldsymbol{p}_t(j)e^{-\eta \ell_{j,t}}}$
\State update: $\boldsymbol{p}_{t+1} = M\boldsymbol{p}_{t+1}$ \textit{i.e}
for $k \in [K]$: $$\boldsymbol{p}_{t+1}(k)= \sum_{j=1}^KM_{k|j} \boldsymbol{p}_{t+1}(j)$$
\EndFor
\end{algorithmic}
\end{algorithm}
\newpage 
  We have the following proposition : 
\begin{prop}
\label{prop::FS}
Algorithm Markov-Hedge, with initial distribution $\boldsymbol{p}_1(k_1,...,k_T)=\pi(k_1)M_{k_2|k_1}...M_{k_T|k_{T-1}}$ and transition matrix $\boldsymbol{M} \in \R^{K \times K}$, guarantees the following regret bound. For $T \ge 1$ and any sequence  $k^T = (k_1,...,k_T)$ : 
   $$ R_T(k^T) \le \frac{1}{\eta}\log\frac{1}{\pi(k_1)} +\frac{1}{\eta}\sum_{t=2}^T \log \frac{1}{M_{k_t|k_{t-1}}}+ \frac{\eta T}{8}\,.$$
Refer to \cite{cesa2006prediction} Theorem 5.1 for a proof
\end{prop}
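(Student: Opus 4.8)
The plan is to exploit the remark from \cite{vovk1997derandomizing} quoted above: interpret Markov-Hedge as plain EWA (Algorithm \ref{alg:EWA}) run over the exponentially large set of \emph{meta-experts}, each meta-expert being a full sequence $k^T=(k_1,\dots,k_T)$ that predicts $\hat{y}_{k_t,t}$ at round $t$ and hence incurs loss $\ell_{k_t,t}$. The prior weight assigned to meta-expert $k^T$ is the Markov prior $\boldsymbol{p}_1(k^T)=\pi(k_1)\prod_{t=2}^T M_{k_t|k_{t-1}}$. The proof then splits into two essentially independent parts: (i) a generic regret bound for EWA run on this meta-expert set, specialized to the Markov prior; and (ii) a verification that the two cheap updates of Algorithm \ref{alg_MH} (the multiplicative loss update followed by the mixing $\boldsymbol{p}_{t+1}\mapsto M\boldsymbol{p}_{t+1}$) reproduce exactly the marginal forecasts of the intractable meta-expert EWA.

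For part (i) I would run the standard potential argument. Define the cumulative meta-loss $L_{k^T,t}=\sum_{s=1}^t\ell_{k_s,s}$ and the potential $W_t=\sum_{k^T}\boldsymbol{p}_1(k^T)e^{-\eta L_{k^T,t}}$, with $W_0=1$. Keeping a single term of the sum for the comparator sequence gives the lower bound $\log(1/W_T)\le \eta L_{k^T,T}+\log(1/\boldsymbol{p}_1(k^T))$. For the matching upper bound, each increment satisfies $W_t/W_{t-1}=\E_{k\sim\boldsymbol{p}_t}e^{-\eta\ell_{k,t}}$, where $\boldsymbol{p}_t$ is the EWA marginal over regimes; convexity of $\ell$ (so the learner's loss is at most the weighted average) together with Hoeffding's lemma bounds $\log(W_t/W_{t-1})$ by $-\eta\ell_t+\eta^2/8$. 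Summing over $t$ and combining the two sides yields the generic bound $R_T(k^T)\le \tfrac1\eta\log\tfrac{1}{\boldsymbol{p}_1(k^T)}+\tfrac{\eta T}{8}$; substituting the factorized prior turns $\log(1/\boldsymbol{p}_1(k^T))$ into $\log\tfrac{1}{\pi(k_1)}+\sum_{t=2}^T\log\tfrac{1}{M_{k_t|k_{t-1}}}$, which is exactly the claimed right-hand side.

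The part I expect to be the real work is (ii), the collapse of the exponential sum to the recursion of Algorithm \ref{alg_MH}. Writing $\boldsymbol{p}_t(j)\propto w_t(j):=\sum_{k^T:\,k_t=j}\boldsymbol{p}_1(k^T)e^{-\eta L_{k^T,t-1}}$ for the marginal EWA weight on regime $j$ at round $t$, I would first note that the future transition factors sum to one, $\sum_{k_{t+1},\dots,k_T}\prod_{s>t}M_{k_s|k_{s-1}}=1$, so the future components drop out and $w_t(j)$ depends only on the prefix $(k_1,\dots,k_t)$. Since both prediction and loss at round $t$ depend on $k^T$ only through $k_t$, the loss factor $e^{-\eta\ell_{j,t}}$ pulls out of the sum, giving the pre-mixing weight $\tilde w_{t+1}(j):=w_t(j)e^{-\eta\ell_{j,t}}$, which is precisely the multiplicative update. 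Isolating the last transition $M_{k|k_t}$ in the prior then gives $w_{t+1}(k)=\sum_j M_{k|j}\,\tilde w_{t+1}(j)$, i.e. the mixing step. Hence the normalized $w_t$ coincide with the $\boldsymbol{p}_t$ produced by Algorithm \ref{alg_MH}, the two algorithms make identical forecasts $\hat{y}_t$ at every round, share the same regret, and the bound from (i) transfers verbatim. The only bookkeeping subtlety is keeping the pre-mixing weight $\tilde w_{t+1}$ distinct from the post-mixing $w_{t+1}$ and matching them to the two lines of the pseudocode; once that is set up cleanly, the rest is routine.
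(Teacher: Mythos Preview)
Your proposal is correct. The paper does not actually supply a proof of this proposition; it simply points the reader to Theorem~5.1 of \cite{cesa2006prediction}. Your two-part argument---running the standard EWA potential/Hoeffding analysis over the exponential set of meta-experts endowed with the Markov prior, and then checking that the marginal weights satisfy exactly the loss-update-then-mix recursion of Algorithm~\ref{alg_MH}---is the canonical route (the Vovk/Herbster--Warmuth derivation the paper alludes to just before the proposition) and delivers the stated bound verbatim. There is nothing in the paper to compare against beyond the citation, and your sketch contains no gaps.
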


\section{Methodology}
\label{sec::methodology}
\textit{Following the remark made in Subsection \ref{subsec :: EWA}
we will first consider $\sigma_t^2=1$ for $t \in [T]$.}
\subsection{Well-specified setting}
As we saw in Section \ref{sec::Relatew_work} what is usually done \cite{adjakossa2023kalman} is to perform an aggregation over the space of observations. This means that we consider our $K$ experts as $K$ Kalman Filters derived with constant variances $Q_t=Q^{(k)},\sigma_t^2=1$. The expert $k$ predicts $\hat{y}_{k,t}$ for all $t \in [T]$ and updates his weights using Algorithm \ref{alg_MH}. The main contribution of this paper is to apply the aggregation framework over the space of variances, as displayed by diagrams in Figure \ref{fig:schema}.
\begin{figure}[!h]
    \centering
    \includegraphics[scale=0.5]{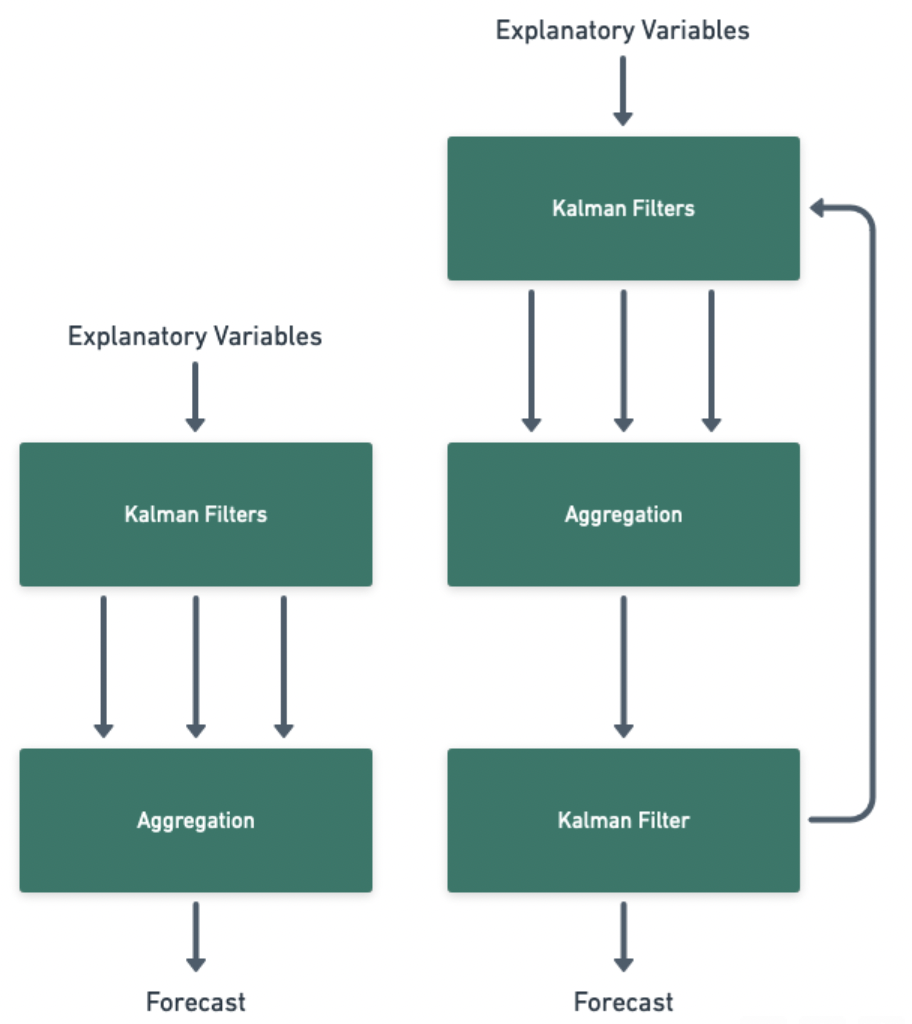}
    \caption{Aggregation of Kalman (left) versus Kalman with aggregation of variances (right).}
    \label{fig:schema}
\end{figure}
The algorithm that we design is a Kalman filter that competes the best covariance matrix at each time $t$ using expert aggregation on $\boldsymbol{\mathcal{Q}}=\{Q^{(1)},...,Q^{(K)}\}$. We consider the space of experts $\boldsymbol{\mathcal{E}}=\boldsymbol{\mathcal{Q}}$ so that at time $t$, each expert $k$ computes $P_{t|t-1}^{(k)},\boldsymbol{\hat\theta}_{t|t-1}^{(k)}$ using equations of Theorem \ref{Kalman_update} and predicts:
$$\hat{y}_{k,t}=\boldsymbol{\hat\theta}_{t|t-1}^{(k)T}\boldsymbol{x}_t.$$
The learner observes the loss $\ell_{k,t}=(y_t-\hat{y}_{k,t})^2$ of each expert and updates his weights as in Algorithm \ref{alg_MH}. Then he selects $ Q^{(k)}$ with probability $\boldsymbol{p}_t(k)$ to compute $P_{t+1|t},\boldsymbol{\theta}_{t+1|t}$ given his previous predictions $P_{t|t-1},\boldsymbol{\theta}_{t|t-1}$ and predicts: $$\hat{y}_{t+1}=\boldsymbol{\hat\theta}_{t+1|t}^T\boldsymbol{x}_{t+1}.$$

This yields the randomized version of our algorithm, Kalman Filter Markov-Hedge (KFMH), detailed in Algorithm \ref{alg:KFMH_random}.

\begin{algorithm}
\caption{Randomized KFMH}\label{alg:KFMH_random}
\begin{algorithmic}
\State \textbf{Require:} $\boldsymbol{p}_1 = K^{-1}\mathbf{1}_K,  \boldsymbol{M} =(M_{i,j})_{i,j=1}^{  K}$
\For{$t\in [T]$}
\State Reveal $y_t$ and ${\boldsymbol x}_{t-1}$
\For{$k \in [K]$}
\State compute $P_{t|t-1}^{(k)}$, $\boldsymbol{\hat{\theta}}_{t|t-1}^{(k)}$, and $\hat{y}_{k,t}=\boldsymbol{\hat{\theta}}_{t|t-1}^{(k)T}\boldsymbol{x}_{t}$
\State  derive loss $\ell_{k,t}$
\EndFor
\State update:
$\boldsymbol{p}_{t+1}(k) =  \frac{\boldsymbol{p}_t(k)e^{-\eta \ell_{k,t}}}{\sum_{j=1}^K \boldsymbol{p}_t(j)e^{-\eta \ell_{j,t}}}$
\State update: $\boldsymbol{p}_{t+1} = M\boldsymbol{p}_{t+1}$ 
\State Select $Q_{t} =Q^{(k)}$  with proba $p_{t+1}(k)$
\State Compute $P_{t+1|t}, \boldsymbol{\hat{\theta}}_{t+1|t}$ 
\State Forecast $\hat{y}_{t+1}=\boldsymbol{\hat\theta}_{t+1|t}^T\boldsymbol{x}_{t+1}$ 
\EndFor
\end{algorithmic}
\end{algorithm} 
The main parameters of this algorithm are $\eta$ and $M$. To reduce the complexity of the problem we will assume in our experience that $M$ only depends of a parameter $\alpha$ such that :
$$M_\alpha = \begin{pmatrix}
1-\alpha & \frac{\alpha}{K-1} & \dots & \frac{\alpha}{K-1}\\
\frac{\alpha}{K-1} & \ddots & \ddots & \vdots \\
\vdots& \ddots &\ddots &\frac{\alpha}{K-1} \\
\frac{\alpha}{K-1}& \dots & \frac{\alpha}{K-1} & 1-\alpha
\end{pmatrix}$$
We can then calibrate $\eta$ and $\alpha$ by grid-search based on the best cumulative loss.
\begin{prop}
\label{prop::KFMHrand}
Algorithm KFMH randomized, with transition matrix $M_\alpha$ and fined tuning for $\eta$ and $\alpha$, satisfies: 
   $$ \E[R_T^*] \le {\cal O}(\sqrt{S T \log T})\,,\qquad T\ge 1,$$
where $S$ is the number of regime switching of the SKF competitor.
\end{prop}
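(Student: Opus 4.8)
The plan is to reduce the statement to the deterministic Markov-Hedge guarantee of Proposition \ref{prop::FS}, specialised to the transition matrix $M_\alpha$, and then to remove the randomisation by conditioning. Throughout I treat $K$ and the loss range as constants and assume the quadratic losses $\ell_{k,t}=(y_t-\hat y_{k,t})^2$ are bounded (so that Hoeffding's lemma produces the $\eta T/8$ term). First I would exploit that at each round the learner draws $Q^{(k)}$ with probability $p_t(k)$ and incurs the drawn expert's loss. Conditioning on the information that fixes the weights $\boldsymbol p_t$ and the expert losses $\ell_{k,t}$ (the single draw being independent of these), this gives
$$\E[\ell_t\mid\F_{t-1}]=\sum_{k=1}^K \boldsymbol p_t(k)\,\ell_{k,t},$$
so that, by the tower property and linearity, $\E[\sum_t \ell_t]=\E[\sum_t\sum_k \boldsymbol p_t(k)\ell_{k,t}]$. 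The right-hand side is the cumulative loss of the deterministic mixture forecaster to which Proposition \ref{prop::FS} applies \emph{path-wise and simultaneously for every} competitor sequence $k^T$.

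Next I would specialise the Markov-Hedge bound to $M_\alpha$. For a competitor with exactly $S$ switches, each of the $T-1-S$ non-switches contributes $\log\frac{1}{1-\alpha}$ and each of the $S$ switches contributes $\log\frac{K-1}{\alpha}$ to the transition term, independently of which regimes are visited; with the uniform prior the initial term is $\log K$. Hence, writing $R_T^{\mathrm{mix}}$ for the mixture-forecaster regret, Proposition \ref{prop::FS} yields, uniformly over all $S$-switch sequences,
$$R_T^{\mathrm{mix}}\le \frac1\eta\Big(\log K+(T-1-S)\log\tfrac{1}{1-\alpha}+S\log\tfrac{K-1}{\alpha}\Big)+\frac{\eta T}{8}.$$
Since this deterministic bound does not depend on \emph{which} $S$-switch sequence is used, taking the infimum over such competitors on each realisation and then the expectation (using $\E[\sum_t\sum_k\boldsymbol p_t(k)\ell_{k,t}]=\E[\sum_t\ell_t]$) transfers it directly to $\E[R_T^*]$.

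The tuning step is then routine. Choosing $\alpha=S/(T-1)$ and using $\log\frac{1}{1-\alpha}\le\frac{\alpha}{1-\alpha}$ gives $(T-1-S)\log\frac{1}{1-\alpha}\le S$, so the bracket is bounded by $\log K+S+S\log(K-1)+S\log\frac{T-1}{S}=\mathcal O(S\log T)$. Setting $C:=\mathcal O(S\log T)$, the expression $C/\eta+\eta T/8$ is minimised at $\eta=\sqrt{8C/T}$, producing $\sqrt{CT/2}=\mathcal O(\sqrt{ST\log T})$. A grid search over $(\eta,\alpha)$, as invoked in the statement, recovers this near-optimal tuning, yielding $\E[R_T^*]\le\mathcal O(\sqrt{ST\log T})$.

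The main obstacle is the very first identity $\E[\ell_t\mid\F_{t-1}]=\sum_k \boldsymbol p_t(k)\ell_{k,t}$. Because the Kalman recursions of Theorem \ref{Kalman_update} are \emph{non-linear} in $Q$, the master trajectory generated by the randomly selected $Q_t$ is not a convex mixture of the expert trajectories — this non-linearity is precisely why one randomises over $\boldsymbol{\mathcal Q}$ rather than averaging the $Q^{(k)}$. Making the identity rigorous therefore requires arguing that, upon drawing $Q^{(k)}$, the learner realises exactly expert $k$'s prediction (e.g. each expert carries its own filter and the learner commits to the drawn one), so that the selected loss equals $\ell_{k,t}$; this is the delicate point on which the whole reduction rests. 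A secondary issue is that the optimal $(\eta,\alpha)$ depend on the a priori unknown $S$, which is handled by the grid search, or, for a fully adaptive guarantee, by a doubling / expert-of-experts argument over a logarithmic grid that inflates the bound only by constant and $\log$ factors.
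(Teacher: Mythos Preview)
Your proposal takes essentially the same route as the paper: use the tower property to replace $\E[\sum_t\ell_t]$ by $\E[\sum_t\sum_k \boldsymbol p_t(k)\ell_{k,t}]$, apply the Markov-Hedge bound of Proposition~\ref{prop::FS} to this linearised loss, specialise the transition term to $M_\alpha$ (yielding the $S$-switch expression), and optimise in $(\alpha,\eta)$. Your explicit choices $\alpha=S/(T-1)$ and $\eta=\sqrt{8C/T}$ are the detailed version of the optimisation the paper leaves implicit, and the concern you raise about the identity $\E[\ell_t\mid\cdot]=\sum_k\boldsymbol p_t(k)\ell_{k,t}$ is exactly the step the paper asserts without further justification.
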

\begin{proof}
Without loss of generality we consider that the random selection at step $t$ following the distribution ${\bf p}_t$ is independent of ${\cal F}_t$ but included in ${\cal F}_{t+1}$.
The tower property yields the identity
$$
\E\Big[\sum_{t=1}^T \ell_{t}\Big]=\E\Big[\sum_{t=1}^T \E[\ell_{t}\mid{\cal F}_{t}]\Big]=\E\Big[\sum_{t=1}^T \sum_{k=1}^Kp_t(k)\ell_{t,k}\Big]\,.
$$
Rewriting the regret bound of Proposition \ref{prop::FS} for the linearized loss and for the specific transition matrix $M_\alpha$, we obtain
$$
\E[R_T]\le  \dfrac{S+1}\eta \log\dfrac K{\alpha}+\dfrac{T-S}\eta \log\Big(\dfrac 1{1-\alpha}\Big)+\dfrac{\eta T}8 \,.
$$
The desired result follows after an optimisation of the regret bound in $\alpha$ and $\eta$.
\end{proof}

\subsection{Misspecified setting}

In well-specified cases, the SKF competitor is optimal and has small cumulative quadratic losses, see e.g. \cite{adjakossa2023kalman}. However, in real-world applications one needs to be more robust to specification. We introduce a new setting where the prediction is provided by a Kalman filter acting on a state equation driven by the covariance matrix
$$
\hat{Q}_t=\sum_{k=1}^K\boldsymbol{p}_{t}(k)Q^{(k)}\,.
$$
The interpretation is to transfer the confidence on each experts computed by $\boldsymbol{p}_{t}$ directly on the state Equation \ref{SKM::var::eq_state}. The gain is to design an estimation of the state variance and a forecaster that is a KF and not an aggregation. One also has to design each expert $k$ in a different way than pure independent KF. Indeed, while in Algorithm \ref{alg:KFMH_random} each expert was computing his prediction independently from the learner; now the expert starts from a prior knowledge $P_{t-2|t-2}, \boldsymbol{\hat{\theta}}_{t-1|t-2}$ of learner's previous predictions 
and derives new KF updates as follows:
\begin{align*}
    &P_{t-1|t-2}^{(k)}=P_{t-2|t-2}+Q^{(k)},\\
     &P_{t-1|t-1}^{(k)}=P_{t-1|t-2}^{(k)} - \frac{P_{t-1|t-2}^{(k)}\boldsymbol{x}_{t-1}\boldsymbol{x}_{t-1}^TP_{t-1|t-2}^{(k)}}{\boldsymbol{x}_{t-1}^TP_{t-1|t-2}^{(k)}\boldsymbol{x}_{t-1}+1},\\
     &\boldsymbol{\hat{\theta}}_{t-1|t-1}^{(k)}=\boldsymbol{\hat{\theta}}_{t-1|t-2}-P_{t-1|t-1}^{(k)}\boldsymbol{x}_{t}(\boldsymbol{\hat{\theta}}_{t-1|t-2}^T\boldsymbol{x}_{t}-y_{t}),  \\
     &\boldsymbol{\hat{\theta}}_{t|t-1}^{(k)}=\boldsymbol{\hat{\theta}}_{t-1|t-1}^{(k)},\end{align*}
to predict:
$$\hat{y}_{k,t} =\boldsymbol{\hat{\theta}}_{t|t-1}^{T(k)}\boldsymbol{x}_{t}.$$ 
Then the learner observes the loss $\ell_{k,t}$ of each expert, updates the weights and computes $\boldsymbol{\hat \theta}_{t+1|t},P_{t+1|t}$ applying KF updates of Theorem \ref{Kalman_update} with $\hat{Q}_{t+1}$.

\subsection{Choice of loss and optimization over $\sigma$}
In Section \ref{sec::Relatew_work} we considered the square loss. 
However, standard methods optimizing time-invariant variances in state-space models optimize the likelihood as the natural objective function \cite{Dur, davis2016introduction}. This leads us to consider the log-likelihood as loss function in our algorithm. Using the chain rule, the log-likelihood writes:
\begin{align*}
    & \log p(y_{1:T},\boldsymbol{x}_{1:T}|Q_{1:T-1},\sigma_{1:T}^2) \\
    & \qquad = \sum_{t=1}^T \log p(y_t|\boldsymbol{x}_t,y_{1:t-1},\boldsymbol{x}_{1:t-1},Q_{1:(t-1)},\sigma_{1:t}^2) + c\\
    & \qquad = \sum_{t=1}^T \ell_t(\boldsymbol{\hat{\theta}}_{t|t-1},P_{t|t-1},\sigma_t^2,y_t) + c \,.
\end{align*} where $c$ is a constant term and the loss function writes:
\begin{align*}
& \ell_t(\boldsymbol{\hat{\theta}}_{t|t-1},P_{t|t-1},\sigma_t^2,y_t)\\
& \quad = \log \N(y_t|\boldsymbol{\hat{\theta}}_{t|t-1}^T\boldsymbol{x}_t, \sigma_t^2 + \boldsymbol{x}_t^TP_{t|t-1}\boldsymbol{x}_t) \,.
\end{align*}
The dependence of this loss to $Q_{t-1}$ is hidden in the estimated state distribution represented by $\boldsymbol{\hat\theta}_{t\mid t-1}, P_{t\mid t-1}$.
The peculiarity of the log-likelihood is that it strongly depends of $\sigma_t^2$. This means that the assumption $\sigma_t^2=1$ is too restrictive. In our experiments we observed that it did not perform well to apply the expert framework to both variances. We believe that it's due to the fact that $Q_{t-1}$ and $\sigma_t^2$ are strongly related. Furthermore, estimating observation noise variance is an easier task compared to the estimation of the covariance matrix of a latent process. Therefore we directly apply a gradient descent algorithm, namely ADAM \cite{kingma2014adam}. Equation $(*)$ in Algorithm \ref{KFMHL} and function $f(.)$ refers to this ADAM gradient descent which involves 4 hyper-parameters (namely $\alpha_1, \beta_1, \beta_2, \epsilon)$ and the gradient of the loss over $\sigma$ which writes :
$$\frac{\partial\ell_t}{\partial \sigma} = -\frac{\sigma(y-\boldsymbol{\hat{\theta}}_{t|t-1}^T\boldsymbol{x}_t)^2}{\sigma^2+\boldsymbol{x}_t^TP_{t|t-1}\boldsymbol{x}_t}+\frac{\sigma}{\sigma^2+\boldsymbol{x}_t^TP_{t|t-1}\boldsymbol{x}_t}.$$
Note that by decoupling the estimation of both variances, we don't need {\it a priori} on the possible values of $\sigma_t^2$ and their relation to the possible values of $Q_t$. 

\subsection{Sliding Window}
We propose another improvement of Algorithm \ref{KFMHL} motivated by \cite{huang2020slide}. The main intuition behind this is to better discriminate between the different experts. We note that $P_{t-1\mid t-1}$ depends on $Q_1,Q_2,\hdots Q_{t-2}$. Our initial algorithm proposes that expert $k$ depends on $Q_1,Q_2,\hdots Q_{t-3},Q^{(k)}$, that is, we replace only $Q_{t-2}$. Instead, we propose to replace the last $\tau$ covariance matrices by $Q^{(k)}$ for each expert $k$.
This replacement of a window $\tau$ is also motivated by the fact we assume $\alpha \ll 1$ in our experiments (typically $\alpha=0.01$), that is, the covariance matrix is not switching too frequently from one mode to another.
Considering this sliding window helps capturing the dynamics by discriminating better between the experts.

Formally, instead of starting from $\boldsymbol{\hat{\theta}}_{t-2|t-2},P_{t-2|t-2}$, we start from $\boldsymbol{\hat{\theta}}_{t-\tau-1|t-\tau-1},P_{t-\tau-1|t-\tau-1}$. For each expert $k$, we derive Kalman updates using $Q_s=Q^{(k)}$ for $s \ge t-\tau-1$, yielding $\boldsymbol{\hat{\theta}}_{t|t-1}^{(k)}, P_{t\mid t-1}^{(k)}$. Consequently, this sliding window is integrated in Algorithm \ref{KFMHL} as the introduction of a new loss function $\ell^{(\tau)}_{k,t}=\N(y_t|\boldsymbol{\hat{\theta}}_{t|t-1}^{(k)T}\boldsymbol{x}_t, \sigma_t^2 + \boldsymbol{x}_t^TP_{t|t-1}^{(k)}\boldsymbol{x}_t)$ where the dependency in $\tau$ refers to the way we create $\boldsymbol{\hat{\theta}}_{t|t-1}^{(k)},P_{t|t-1}^{(k)}.$

\begin{algorithm}[!h]
\caption{KFMH}\label{KFMHL}
\begin{algorithmic}
\State \textbf{Require:} $\boldsymbol{p}_1 = K^{-1}\mathbf{1}_K,  \boldsymbol{M} =(M_{i,j})_{i,j=1}^{  K}, \sigma_0$
\For{$t\in [T]$}
\State Reveal $y_t$ and ${\boldsymbol x}_{t-1}$.
\For{$k \in [K]$}
\State compute $P_{t|t-1}^{(k)}$, $\boldsymbol{\hat{\theta}}_{t|t-1}^{(k)}$, and $\hat{y}_{k,t}=\boldsymbol{\hat{\theta}}_{t|t-1}^{(k)T}\boldsymbol{x}_{t}$
\State  derive loss $\ell_{k,t}^{(\tau)}$
\EndFor
\State for $k \in [K]$ update:
$\boldsymbol{p}_{t+1}(k) =  \frac{\boldsymbol{p}_t(k)e^{-\eta \ell_{k,t}}}{\sum_{j=1}^K \boldsymbol{p}_t(j)e^{-\eta \ell_{j,t}}}$
\State update: $\boldsymbol{p}_{t+1} = M\boldsymbol{p}_{t+1}$ 
\State Select $Q_{t} =\sum_{k=1}^K p_t(k)Q^{(k)}$ 
\State Compute $P_{t+1|t}, \boldsymbol{\hat{\theta}}_{t+1|t}$ 
\State Forecast $\hat{y}_{t+1}=\boldsymbol{\hat\theta}_{t+1|t}^T\boldsymbol{x}_{t+1}$ 
\State Observe loss $\ell_t(\boldsymbol{\hat{\theta}}_{t|t-1},P_{t|t-1},\sigma_t^2,y_t)$
\State Update $\sigma_t = \sigma_{t-1} + f\left(\frac{\partial \ell_t(\boldsymbol{\hat\theta}_{t|t-1},P_{t|t-1},y_t)}{\partial \sigma}\right)$ (*)
\EndFor
\end{algorithmic}
\end{algorithm} 

\section{Experiments}\label{sec:experiments}
We compare the different methods introduced on various data sets. We begin with synthetic data; first we generate the data following the state-space model introduced in Section \ref{sec::Relatew_work}, we call that well-specified and it is the most favourable setting; then, we generate data not following the state-space assumption anymore, we call that experiment misspecified and it is an intermediate step before the application to real data. We conclude by a real data set motivating our setting: electricity load forecasting. We show that our method outperforms the standard Kalman aggregation as long as the experts are well-known even in the misspecified context.

\subsection{Experiments on Synthetic Data}
\textit{In the following we consider $K=2$.}
\subsubsection{Well-Specified (WS) Data with $\sigma_t^2 = \sigma^2$}
\label{subsec::WS}
\paragraph{Data Generation}
~~\\
We consider the model given by Equations $(\ref{SKM::var::eq_obs})$,(\ref{SKM::var::eq_state}) and (\ref{var::eq_markov}) and generate data as follow :
\begin{align*}
    \mathbb{P}(Z_{t+1}=Z_t) &= 1-\alpha, \\
\boldsymbol{\theta}_{t+1} - \boldsymbol{\theta}_t &\sim \N(\boldsymbol{0}_{\R^3},Q_{Z_t}), \\
y_t -\boldsymbol{\theta}_t^T \boldsymbol{x}_t &\sim \N(0,1),
\end{align*}
with $Q_{Z_t} \in \{Q^{(1)},Q^{(2)}\}$ where: 
\begin{align*}
    Q^{(1)}&=\Delta(10^{-4},10^{-1},10^{-2}),\\
    Q^{(2)} &= \Delta(10^{-1},10^{-4},10^{-2}).
\end{align*} 
We distinguish three ways of generating $\boldsymbol{x}_t$:
i.i.d Gaussian design: $\boldsymbol{x}_t \sim \N(\boldsymbol{0}_{\R^d},\boldsymbol{1}_{\R^d}),$
i.i.d uniform design: $\boldsymbol{x}_t \sim \mathcal{U}(\boldsymbol{0}_{\R^d},\boldsymbol{1}_{\R^d})$ and non-i.i.d design: $\boldsymbol{x}_1$ is generated as before. Then for $j \in [\![1,3]\!]$ and $t \ge 2$, we consider $\boldsymbol{a}_{t,j} = \boldsymbol{x}_{t-1,j} + \epsilon_{t,j}$ where $\epsilon_{t,j} \sim \N(0_{\R^3},10^{-3})$ and we use $\boldsymbol{x}_{t,j} = \boldsymbol{a}_{t,j}$ if $0 \le \boldsymbol{a}_{t,j} \le 1$, otherwise $\boldsymbol{x}_{t,j} = \lceil \boldsymbol{a}_{t,j}\rceil - \boldsymbol{a}_{t,j}$.
We generate sequence of length $T = 1000$ with a transition parameters $\alpha=10^{-2}$. 

\textbf{Algorithm Parameterization}\\
For both Algorithm \ref{alg_MH} and \ref{KFMHL} we set $\eta = 1$, and take latence $\tau = 5$ for KFMH and initial $\sigma_0 = 0.8$. For ADAM optimizer $f(.)$ we take the step-size $\alpha_1 = 0.01$ and the others standard hyper-parameters as in \cite{kingma2014adam} $(\beta_1 = 0.9, \beta_2=0.999, \epsilon=10^{-8}).$  
\subsubsection{Misspecified (MS) Data}
\textit{In this subsection we forget the Markov representation and consider that we play against a sequence of expert which is generated by some specific function}. 
\paragraph{Data generation : Sinusoidal}
~~\\
As discussed before, we assume here that $\sigma_t^2$ and $Q_t$ are independent. We generate $Q_t$ and $\sigma_t^2$ in the following way :
\begin{equation}
\label{eq::ms:Q}
     Q_t = \cos^2\left(\frac{3\pi t}{T}\right)Q^{(1)} + \sin^2\left(\frac{3\pi t}{T}\right) Q^{(2)},
\end{equation}
\begin{equation}
\label{eq::ms:sigma}
     \sigma_t = \sigma^{(1)}\cos^2\left(\frac{5\pi t}{T}\right)  + \sigma^{(2)} \sin^2\left(\frac{5\pi t}{T}\right),
\end{equation}
where $Q^{(1)},Q^{(2)}$ are as in Subsection \ref{subsec::WS} and $\sigma^{(1)}=1, \sigma^{(2)}=2$.

\subsection{Real Data}
Lastly we present our results on Real Data. We apply our methodology to the case study of \cite{obst2021adaptive}. We forecast the half-hourly French national electricity consumption published by {\it Réseau de Transport d'Électricité} \footnote{\url{https://opendata.reseaux-energies.fr/}}, as well as meteorological data from {\it MétéoFrance}\footnote{\url{https://donneespubliques.meteofrance.fr/}}. We consider the adaptation of the following generalized additive model, built independently for the 48 half-hours of the day:
\begin{align}
    \label{eq:gam}
    & y_t = f(Toy_t) + \sum\limits_{i=1}^7 \alpha_i \mathbf{1}_{DayType_t =i} \\
    \nonumber
    & \quad + f(TempS95_t) + f(TempS99_t) + f(t, Temp_t) \\
    \nonumber
    & \quad + s(CPWind_t) + f(t) + \beta Lag1_t + \gamma Lag7_t + \varepsilon_t
\end{align}
where at each time $t$,
$y_t$ is the consumption,
$DayType_t$ is the day of the week between $1$ and $7$,
$Temp_t$ is the temperature,
$TempS95_t$ and $TempS99_t$ are exponential smoothings of the temperature of exponential parameters $0.95$ and $0.99$ respectively,
$CPWind_t$ is the cooling power of wind (c.f. \cite{ludwig2023probabilistic}, Section 2.2),
$Lag1_t$ and $Lag7_t$ are the delayed consumption (one day ago and seven days ago, respectively).

We follow the framework of \cite{obst2021adaptive}: we consider the adaptation of a linear combination of the frozen effects. Formally, Equation \eqref{eq:gam} yields a GAM of 9 effects, and we define $\boldsymbol{x}_t\in\mathbb{R}^9$ the vector composed of these linear and non-linear effects. Our state-space paradigm is integrated naturally as
\begin{align*}
    & y_t = \boldsymbol{\theta}_ t^T\boldsymbol{x}_t + \epsilon_t, \qquad \epsilon_t \sim \mathcal{N}(0,\sigma_{t}^2) \\
    & \boldsymbol{\theta}_{t+1} = \boldsymbol{\theta}_t + \nu_t, \qquad \nu_t \sim \mathcal{N}(0,Q_{t})
\end{align*}

Our data ranges from January 2012 to December 2022. It has been extensively studied that the coronavirus crisis has implied a brutal drop in the electricity consumption in many countries, and France in particular \cite{IEA2020, farrokhabadi2022day}. State-space models have provided a very efficient framework to tackle the poor performances of traditional forecasting methods during this period \cite{obst2021adaptive, de2022state}. We propose to apply a switching strategy, in order to win in stable as well as unstable periods. Intuitively, defining a {\it slow} covariance matrix $Q^{(slow)}$ and a {\it fast} covariance matrix $Q^{(fast)}$, a switching Kalman-based algorithm is able to use $Q^{(slow)}$ in {\it slow} mode and $Q^{(fast)}$ in {\it fast} mode, yielding improvements in both worlds.

To define $Q^{(slow)}$ and $Q^{(fast)}$, we rely on an optimization of constant variances introduced in \cite{obst2021adaptive}, Section 2.A.2. However, we optimize the variances on different time periods. Optimizing on pre-covid data (2012-2019) yields the {\it slow} mode, and optimizing on covid times (the year 2020) yields the {\it fast} mode. We apply the same optimization method to define  $\sigma^{2(slow)}$ and $\sigma^{2(fast)}$ associated. We compute our test performances on the period ranging from January 2021 to December 2022.
\subsection{Results}

\begin{figure}[!h]
{\includegraphics[width = 3in]{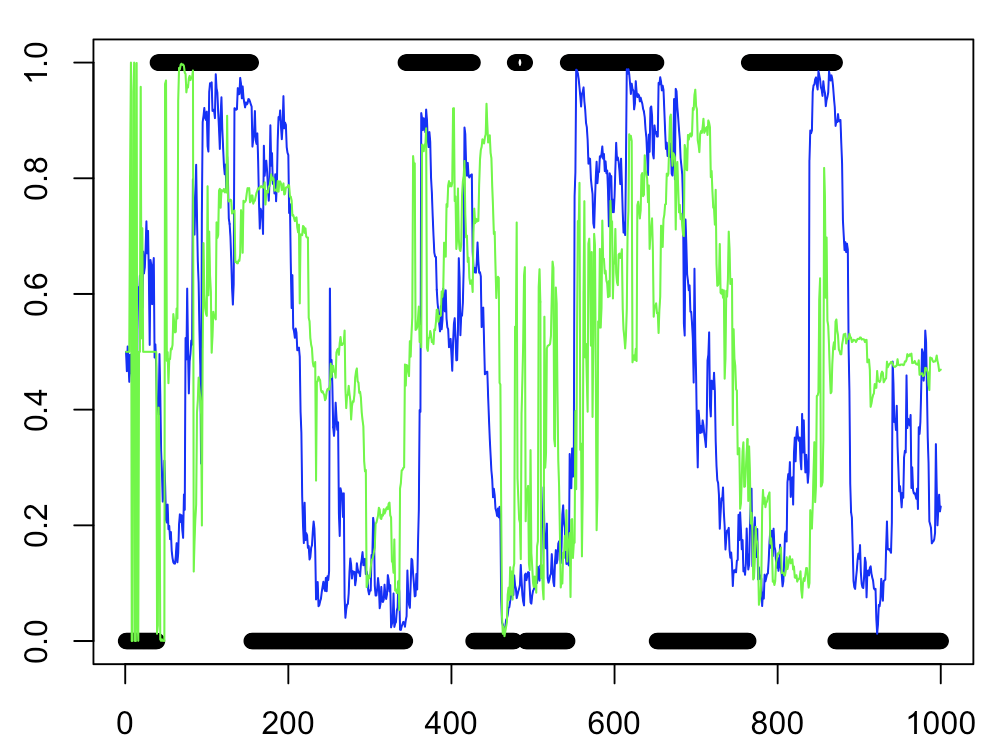}} 
{\includegraphics[width = 3in]{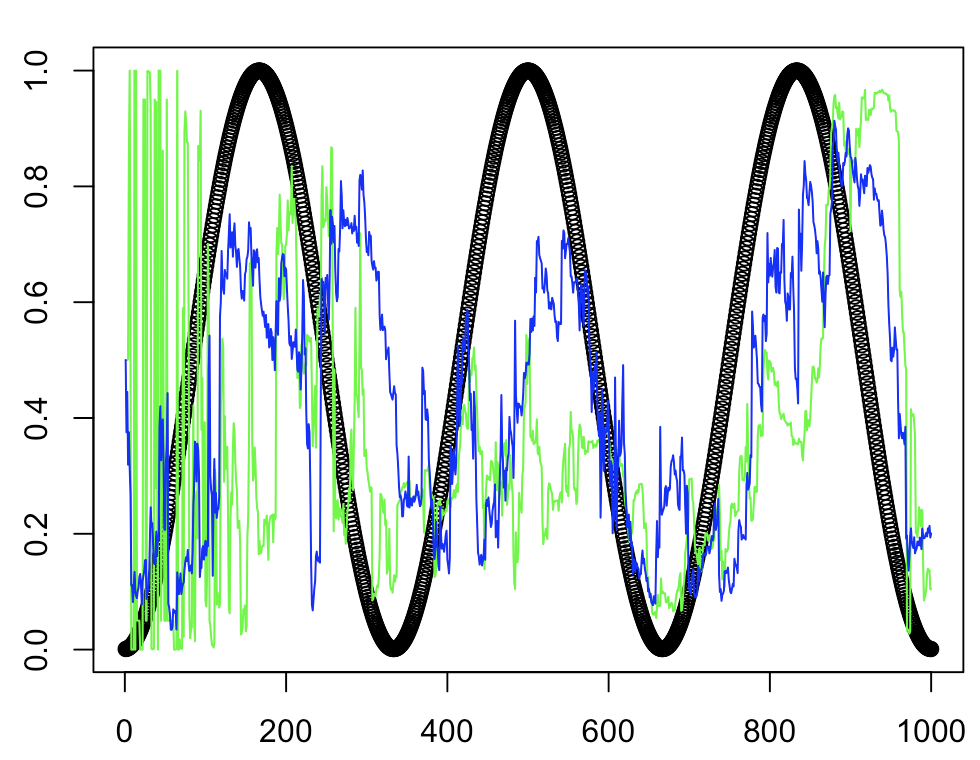}}\\
\caption{Evolution of the weights $\boldsymbol{p}_t$ for WS (left) and MS Data (right). }
\end{figure}
\begin{figure}
  {\includegraphics[width = 3in]{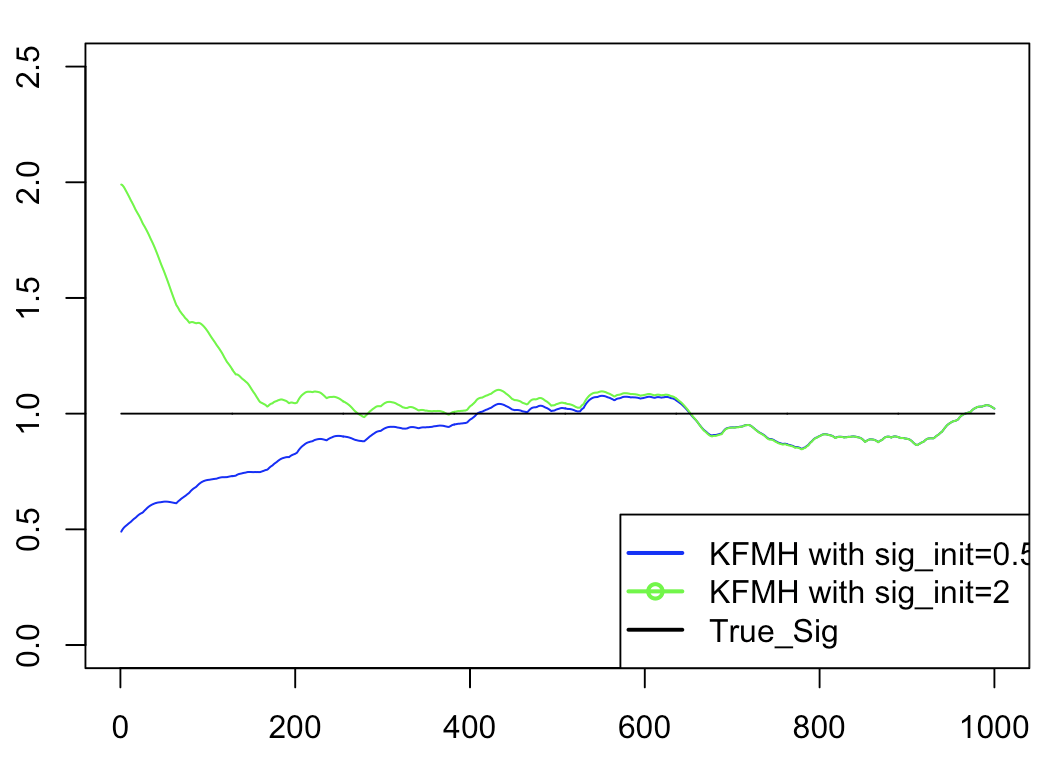}}
{\includegraphics[width = 3in]{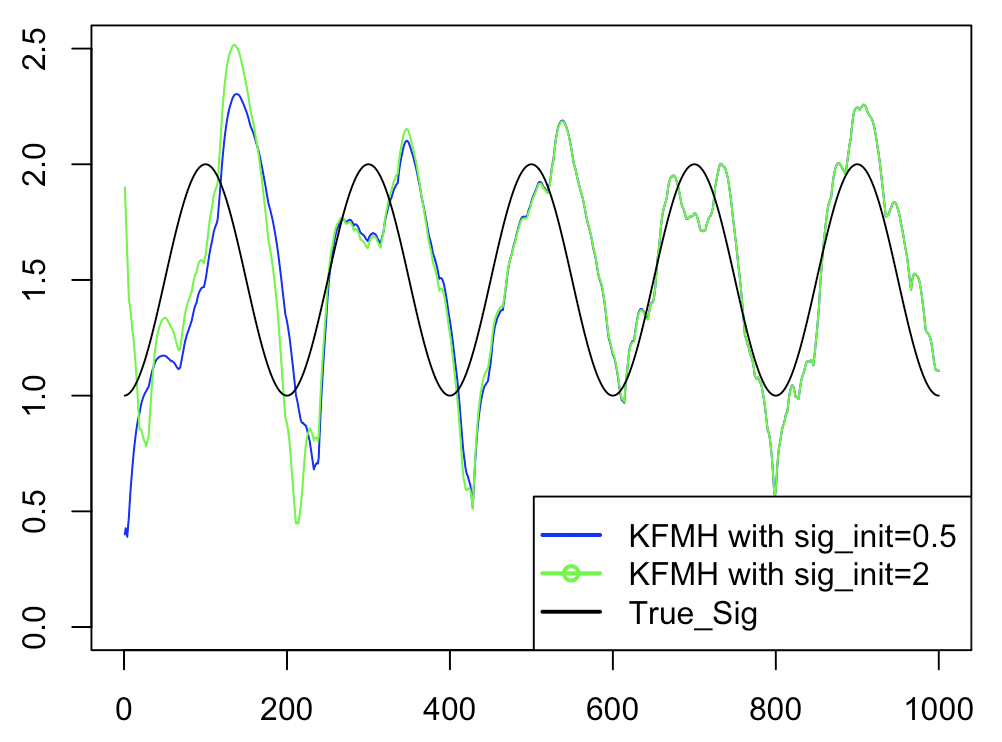}} 
\caption{Evolution of $\sigma_t$ starting from 2 different initial points for WS (left) and MS Data (right). }
\end{figure}

The results of our experiments on synthetic data are gathered in table 1 (see Supplementary materials). 
We generate $n_{iter}=1000$ sequences of length $T$ and compare our algorithm KFMH to differents methods using the Mean Square Error (MSE). First we consider the  {\it KF-adaptive} which is the optimal Kalman filter knowing $Q_t, \sigma_t^2$; this is the oracle or best possible forecast. Then we observe the performance of experts $1$ and $2$ which are Kalman filters with constant covariance matrix $Q_t=Q^{(1)}$ or $Q_t=Q^{(2)}$; we denote them by KF-$Q^{(1)}$ and KF-$Q^{(2)}$. We consider another Kalman filter with constant covariance matrix, KF-$Q^{(mean)}$, which applies $Q_t = \frac{Q^{(1)}+Q^{(2)}}{2}$. This is a better containder for our algorithm since in both WS and MS scenarios we generate data such that we approximately spend the same time in both regimes. Finally we present the results of KF-$agg$ the method that executes Algorithm \ref{alg_MH}. Except for our algorithm, every method has access to the true $\sigma_t^2$. We see that our algorithm outperforms the basic aggregation over the space of observation in every case. We observe that in the MS setting, KF-$Q^{(mean)}$ with knowledge of $\sigma_t^2$ beats our algorithm when generating i.i.d uniform and non-i.i.d $\boldsymbol{x}_t$, although  we are very close. Overall we observe that $Q^{(mean)}$ yields good performances on synthetic data; this was to expect because the covariance matrix of the latent process is $Q^{(1)}$ half of the time and $Q^{(2)}$ the other half. The results on real data are presented in table 2 (see Supplementary materials). For our evaluation, we consider the  first ten instants of the day over the $2021-2022$ period. Since we did not generate the data, {\it KF-adaptive} exists no longer and we only compare KFMH to the Kalman filters of constant variances (KF-$Q^{(slow)}$, KF-$Q^{(fast}$, KF-$Q^{(mean)}$), as well as the standard expert aggregation. This time we don't have access to the knowledge of $\sigma_t^2$ so each constant expert assumes $\sigma_t^2 = \sigma^2$ where $\sigma^2$ is respectively $\sigma^{2(slow)}$, $\sigma^{2(fast)}$ and $\sigma^{2(mean)}=\frac{\sigma^{2(slow)}+\sigma^{2(fast)}}{2}$. We find that we always outperform every constant expert, and beat the standard aggregation in average.

Figure \ref{fig:expe} shows the evolution of the weights $\boldsymbol{p}_t$ obtained by Algorithm \ref{KFMHL} (in blue) and Algorithm \ref{alg_MH} (in green) in well-specified as well as misspecified settings.
We also display the evolution of $\sigma_t$ estimated with Algorithm \ref{KFMHL} starting from two different initial points; respectively $\sigma_0=0.5$ (blue) and $\sigma_0 = 2$ (green) with respect to the true $\sigma_t$ (black) both in WS case ($\sigma_t^2=1)$ and MS context where $\sigma_t$ is generated with Equation \ref{eq::ms:sigma}. The figures suggest the consistency of the method and its robustness towards misspecification.

Below we provide the results of the experiments described in Section 4. We see that our algorithm is always better than a standard aggregation when performing on synthetic datasets, and always very close to the optimal. 
\begin{table}[h]
\caption{Comparison of MSE for each method on synthetic datas in both WS and MS case.} \label{tab::synthetic}
\begin{center}
\begin{tabular}{ccccccc}
\textbf{Method}  &KF-adpative & KF-$Q^{(1)}$ & KF-$Q^{(2)}$ & KF-$Q^{(mean)}$ & KF-$agg$ & KFMH\\
\hline \\
\textbf{i.i.d gauss WS}         &$1.802$ &$5.091$&$5.523$&$1.990$&$2.519$&$\textbf{1.916}$\\
\textbf{i.i.d unif WS}           &$1.796$& $4.021$ & $4.393$ &$1.931$ & $2.301$&$\textbf{1.891}$ \\
\textbf{non-i.i.d WS}            &$1.251$ &$1.471$&$1.415$&$1.267$&$1.301$&$\textbf{1.266}$ \\
\textbf{i.i.d gauss MS} &$1.908$&$5.192$&$5.526$&$1.988$&$2.799$&$\textbf{1.986}$ \\
\textbf{i.i.d unif MS}&$1.370$&$2.367$&$2.343$&$\textbf{1.393}$&$1.653$ &$1.402$\\
\textbf{non-i.i.d WS} &$2.758$&$3.040$&$2.964$&$\textbf{2.768}$&$2.823$&$2.777$
\end{tabular}
\end{center}
\end{table}

\begin{table}[h]
\caption{Comparison of RMSE over 2021-2022 period for each method on real data for 10 instants of the day.} \label{tab::synthetic}
\begin{center}
\begin{tabular}{ccccccc}
\textbf{Method}   &KF-$Q^{(slow)}$ & KF-$Q^{(fast)}$ & KF-$Q^{(mean)}$ & KF-$agg$ & KFMH\\
\hline \\
\textbf{Instant 1}         &$1005.572$ &$805.620$&$796.485$&$848.549$&$\textbf{788.284}$\\
\textbf{Instant 2}           &$888.951$& $808.081$ & $797.431$ &$790.475$ &$\textbf{786.788}$ \\
\textbf{Instant 3}           &$896.954$ &$838.322$&$827.544$&$\textbf{810.124}$&$812.846$ \\
\textbf{Instant 4} &$820.090$&$876.641$&$827.933$&$\textbf{812.816}$&$813.703$ \\
\textbf{Instant 5}&$842.068$&$848.340$&$844.974$&$849.140$&$\textbf{838.871}$\\
\textbf{Instant 6} &$842.665$&$846.738$&$845.461$&$842.816$&$\textbf{840.530}$ \\
\textbf{Instant 7}&$854.612$&$908.600$&$858.957$&$895.452$&$\textbf{849.253}$ \\
\textbf{Instant 8}&$853.396$&$926.782$&$860.292$&$\textbf{843.292}$&$847.000$ \\
\textbf{Instant 9} &$863.809$&$935.049$&$871.084$&$\textbf{855.107}$&$859.937$\\
\textbf{Instant 10} &$958.609$&$922.263$&$912.158$&$937.550$&$\textbf{901.185}$\\

\end{tabular}
\end{center}
\end{table}

\section{Conclusion}
In this paper, we have studied LGSSMs under unknown variances. We relied on online learning to estimate the variances. We considered the paradigm of SKF and used aggregation of experts to infer the variance of the process noise; to approximate the observation noise variance we applied an online gradient descent algorithm. Our experiments show good performances not only in the favourable well-specified setting but also under misspecification on both synthetic and real data, illustrating the robustness of our method. 

This paper opens different leads for future research. First, we specified on KF algorithms but we could apply our framework to other forecasting algorithms, involving the expert aggregation confidence weights at the right level, not necessarily on the predictions. Second, the choice of the diversity among the experts is crucial in aggregation \cite{gaillard2015forecasting}, and it is challenging to design different KF algorithms in an optimal way.

\bibliography{biblio}

\newpage

\appendix

\section{VB Derivation}
 \textit{In this section we provide the details of the derivation of the VB approach, mentioned in Section 2, that we found ineffective.}
\paragraph{}
~~\\
As mentioned VB 
approach looks for the pair $(q_\theta^*, q_Z^*)= (q_\theta^*(\boldsymbol{\theta}_{1:t}), q_Z^*(Z_{1:t}))$ that is the closest to the complete (\textit{i.e} taken over the whole sequence) joint posterior $p_{\F_t} = p(\boldsymbol{\theta}_{1:t},Z_{1:t}|\F_t)$ :
$$(q_\theta^*, q_Z^*) \in \text{argmin}_{(q_\theta, q_Z)\in \mathcal{D}} \ KL(q_\theta \times q_Z \ || \ p_{\F_t})$$

 One can show that :
\begin{equation}
      \ln q_\theta^*(\boldsymbol{\theta}_{1:t}) = \mathbb{E}_{q_Z(Z_{1:t})}\left[\ln p_{\F_t}\right] + c_1 \qquad  \qquad \qquad\text{(E-$\theta$ step)}
      \label{var::Etheta}
\end{equation}
\begin{equation}
    \ln q_Z^*(Z_{1:t}) =  \mathbb{E}_{q_\theta(\boldsymbol{\theta}_{1:t})}\left[\ln p_{\F_t}\right]  + c_2 \qquad \qquad \qquad \text{(E-$Z$ step)}
    \label{var::EZ}
\end{equation}

The aim is then to iteratively approximate the marginal  $q_\theta^*$ and $q_Z^*$, assuming the other is known and fixed. 
We provide further the closed form of what we call respectively E-$\theta$ step and E-$Z$ step. 
\subsection{E-$\theta$ step}
\begin{prop}(E-$\theta$ step)
~~\\
 For some $t > 1$, for any $q_Z(Z_{1:t})$ if we define for $s \in [\![1,t]\!]$ :
     \begin{align*}
         \overline{\sigma}_s^{-1} &= \sqrt{\overset{K}{\underset{k=1}{\sum }}q_Z(Z_s=k) \frac{1}{\sigma_{k}^2}}\\
          \overline{Q}_s^{-1} &=  \overset{K}{\underset{k=1}{\sum }}q_Z(Z_s=k)Q_{k}^{-1}
     \end{align*}
     then  the distribution $q_\theta^*$ which minimizes $KL(q_\theta \times q_Z || \ p_{\F_t})$ for $q_Z$ fixed  writes:
     \begin{equation}
     \label{var::eq_ln_qtheta}
   \ln q_\theta^*(\boldsymbol{\theta}_{1:t})=  
  \sum_{s=1}^{t} \Bigg(\ln \N(y_s|\boldsymbol{\theta}_s^T\boldsymbol{x}_s,\overline{\sigma}_s^2)+\ln \N(\boldsymbol{\theta}_s|\boldsymbol{\theta}_{s-1},\overline{Q}_{s-1})\Bigg)+ c
\end{equation}
where we denote $\boldsymbol{\theta}_0 = \boldsymbol{\hat{\theta}}_1$ and $\overline{Q}_0 = P_1$.
\label{var::prop_qtheta}.
 \end{prop}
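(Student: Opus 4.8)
The plan is to start from the general E-$\theta$ characterization stated just above, namely $\ln q_\theta^*(\boldsymbol{\theta}_{1:t}) = \mathbb{E}_{q_Z}[\ln p_{\F_t}] + c_1$, and to evaluate the right-hand side explicitly by expanding the log-posterior into its factors. Taking the logarithm of the expression for $p_{\F_t}$ given earlier, I obtain a sum of log-factors: the prior term $\ln \N(\boldsymbol{\theta}_1|\boldsymbol{\hat{\theta}}_1, P_1)$, the observation terms $\sum_{s=1}^t \ln \N(y_s|\boldsymbol{\theta}_s^T\boldsymbol{x}_s, \sigma_{Z_s}^2)$, the transition terms $\sum_{s=1}^{t-1} \ln \N(\boldsymbol{\theta}_{s+1}|\boldsymbol{\theta}_s, Q_{Z_s})$, and the purely discrete contributions $\ln \pi(Z_1) + \sum_{s=1}^{t-1} \ln M_{Z_{s+1},Z_s}$ together with the unknown log-normalization. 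Since the discrete and normalization terms do not depend on $\boldsymbol{\theta}_{1:t}$, their $q_Z$-expectation is a constant to be absorbed into $c$.

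The core of the argument is computing the $q_Z$-expectation of the two Gaussian families. Writing $\ln \N(y_s|\boldsymbol{\theta}_s^T\boldsymbol{x}_s, \sigma_{Z_s}^2) = -\tfrac12 \ln(2\pi\sigma_{Z_s}^2) - \tfrac{(y_s - \boldsymbol{\theta}_s^T\boldsymbol{x}_s)^2}{2\sigma_{Z_s}^2}$, the first term is $\boldsymbol{\theta}$-independent and folds into $c$; the expectation of the second only requires the linearity identity $\mathbb{E}_{q_Z}[\sigma_{Z_s}^{-2}] = \sum_{k} q_Z(Z_s=k)\,\sigma_k^{-2} = \overline{\sigma}_s^{-2}$, which reproduces exactly the $\boldsymbol{\theta}$-dependent part of $\ln \N(y_s|\boldsymbol{\theta}_s^T\boldsymbol{x}_s, \overline{\sigma}_s^2)$. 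Analogously, for the transition factor I use $\ln \N(\boldsymbol{\theta}_{s+1}|\boldsymbol{\theta}_s, Q_{Z_s}) = -\tfrac{d}{2}\ln(2\pi) - \tfrac12 \ln|Q_{Z_s}| - \tfrac12(\boldsymbol{\theta}_{s+1}-\boldsymbol{\theta}_s)^T Q_{Z_s}^{-1}(\boldsymbol{\theta}_{s+1}-\boldsymbol{\theta}_s)$, discard the first two $\boldsymbol{\theta}$-independent summands, and apply the matrix identity $\mathbb{E}_{q_Z}[Q_{Z_s}^{-1}] = \sum_{k} q_Z(Z_s=k)\, Q_k^{-1} = \overline{Q}_s^{-1}$, yielding the $\boldsymbol{\theta}$-dependent part of $\ln \N(\boldsymbol{\theta}_{s+1}|\boldsymbol{\theta}_s, \overline{Q}_s)$. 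The conceptual point is that the VB factorization averages \emph{precisions} (inverse variances), not variances, which is precisely how $\overline{\sigma}_s$ and $\overline{Q}_s$ are defined.

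I would then add back the $\boldsymbol{\theta}$-independent normalizers $-\tfrac12 \ln(2\pi\overline{\sigma}_s^2)$ and $-\tfrac12 \ln|\overline{Q}_s|$, which is legitimate since they also fold into $c$, so that each expectation is recognized as a complete Gaussian log-density. A final reindexing closes the proof: the prior term is identified with the $s=1$ summand $\ln \N(\boldsymbol{\theta}_1|\boldsymbol{\theta}_0, \overline{Q}_0)$ under the conventions $\boldsymbol{\theta}_0 = \boldsymbol{\hat{\theta}}_1$ and $\overline{Q}_0 = P_1$, while the shift $s \mapsto s-1$ turns $\sum_{s=1}^{t-1}\ln \N(\boldsymbol{\theta}_{s+1}|\boldsymbol{\theta}_s, \overline{Q}_s)$ into $\sum_{s=2}^{t} \ln \N(\boldsymbol{\theta}_s|\boldsymbol{\theta}_{s-1}, \overline{Q}_{s-1})$. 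Collecting the observation and transition sums gives exactly the stated Equation \eqref{var::eq_ln_qtheta}.

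The main obstacle is bookkeeping rather than any deep difficulty: one must carefully track which terms depend on $\boldsymbol{\theta}_{1:t}$ and which do not, and in particular be comfortable freely exchanging the true $q_Z$-averaged normalizers (for instance $\mathbb{E}_{q_Z}[\ln|Q_{Z_s}|]$) for the normalizers of the effective Gaussians $\overline{\sigma}_s^2$ and $\overline{Q}_s$, since the proposition only characterizes $\ln q_\theta^*$ up to the additive constant $c$.
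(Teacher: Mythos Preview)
Your proposal is correct and follows essentially the same approach as the paper: both start from the variational characterization $\ln q_\theta^* = \mathbb{E}_{q_Z}[\ln p_{\F_t}] + c$, expand the Gaussian log-densities, absorb the $\boldsymbol{\theta}$-independent terms into the constant, and use linearity to average the precisions, yielding $\overline{\sigma}_s^{-2}$ and $\overline{Q}_s^{-1}$. Your treatment is in fact slightly more explicit than the paper's about the reindexing and the conventions $\boldsymbol{\theta}_0=\boldsymbol{\hat\theta}_1$, $\overline{Q}_0=P_1$, but the argument is the same.
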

 \begin{proof}
     For \(s \in [1 : t]\) we have:
\begin{align*}
\E_{q_Z}\left[\ln \N (y_s | \theta^T_s x_s, \sigma^2_{Z_s})\right] &= -\frac{1}{2} \sum_{1\leq k_1, \ldots, k_t \leq K} q_Z(Z_{1:t} = k_1, \ldots, k_t) \left(\frac{\boldsymbol{\theta}_s^T \boldsymbol{x}_s \boldsymbol{x}_s^T \boldsymbol{\theta}_s}{\sigma_{k_s}^2} -2y_s \frac{ \theta^T_s \boldsymbol{x}_s}{\sigma_{k_s}^2}\right) + c \\
&= -\frac{1}{2} \sum_{k_s=1}^K q_Z(Z_s = k_s) \left(\frac{\boldsymbol{\theta}^T_s \boldsymbol{x}_s x_s^T \boldsymbol{\theta}_s}{\sigma_{k_s}^2} - \frac{2y_s \boldsymbol{\theta}^T_s \boldsymbol{x}_s}{\sigma_{k_s}^2}\right) + c \\ 
\E_{q_Z}\left[\ln \N (\boldsymbol{\theta}_s | \boldsymbol{\theta}_{s-1},Q_{Z_s})\right] &= -\frac{1}{2} \sum_{k_s=1}^K q_Z(Z_{s-1} = k_{s-1}) \left(
\boldsymbol{\theta}_{s}^TQ_{k_{s-1}}^{-1}\boldsymbol{\theta}_{s}-2\boldsymbol{\theta}_{s}^TQ_{k_{s-1}}^{-1}\boldsymbol{\theta}_{s-1}+\boldsymbol{\theta}_{s-1}^TQ_{k_{s-1}}^{-1}\boldsymbol{\theta}_{s-1}\right) + c  
\end{align*}
Thus by denoting $\bar{\sigma}^{-1}_s = \sqrt{\sum_{k=1}^K q_Z(Z_s = k)\frac{1}{\sigma_k^2}}$ and $\bar{Q}_s^{-1} = \sum_{k=1}^K q_Z(Z_s = k) Q^{-1}_k$, we therefore have:
$$
\ln q_\theta^*(\boldsymbol{\theta}_{1:t}) = \sum_{s=1}^t \ln \N(y_s | \boldsymbol{\theta}^T_s \boldsymbol{x}_s, \bar{\sigma}_s^2) + \ln \N(\boldsymbol{\theta}_s | \boldsymbol{\theta}_{s-1}, \bar{Q}_{s-1}) + c$$
 \end{proof}
     This last proposition shows that we are exactly in the Kalman setting where the covariance matrix at each time step is the inverse of the average of the inverse covariance matrix over the law of $Z$. Indeed if we define the latent space model as :
 \begin{equation}
     \text{State :} \qquad \boldsymbol{\theta}_{t+1} = \boldsymbol{\theta}_t + \boldsymbol{\nu}_t  \ , \qquad \boldsymbol{\nu}_t \sim \N(\boldsymbol{0},\overline{Q}_t)
 \end{equation}
 \begin{equation}
     \text{Space : } \qquad y_t = \boldsymbol{\theta}_t^T\boldsymbol{x}_t + \epsilon_t \ , \qquad \epsilon_t \sim \N(0,\overline{\sigma}_t)
 \end{equation}
The log-likelihoof of the model writes as in (\ref{var::eq_ln_qtheta}). 
\begin{prop}
\label{E_theta}
 For some $t>1$ and $s \in [\![1:t]\!]$, if we denote the $ q_\theta^F(\boldsymbol{\theta}_s)$ the filtering approximation of$
p(\boldsymbol{\theta}_s|\boldsymbol{x}_{1:s},y_{1:s})$
 then we have:
  \begin{equation}
      q_\theta^F(\boldsymbol{\theta}_s) =  \N(\boldsymbol{\theta}_s|\boldsymbol{\hat{\theta}}_{s|s},P_{s|s})
      \label{var::eq_filtering}
  \end{equation} 
  with the following updates :
\begin{align*}
   & P_{s|s-1} = P_{s-1|s-1} + \overline{Q}_{s-1}  \\    &\boldsymbol{\hat{\theta}}_{s|s-1} = \boldsymbol{\hat{\theta}}_{s-1|s-1} \\
    & P_{s|s}= P_{s|s-1} -\frac{P_{s|s-1}\boldsymbol{x}_s\boldsymbol{x}_s^T P_{s|s-1}}{\boldsymbol{x}_s^T P_{s|s-1}\boldsymbol{x}_s + \overline{\sigma}_s^{2}}     \\
    &\boldsymbol{\hat{\theta}}_{s|s} = \boldsymbol{\hat{\theta}}_{s|s-1} -\frac{P_{s|s}}{\overline{\sigma}_s^{2}}\left(\boldsymbol{x}_s(\boldsymbol{\hat{\theta}}_{s|s-1}^T\boldsymbol{x}_s - y_s)\right) 
\end{align*}
The proof is immediate considering  Theorem 1  and the previous remark.
 \end{prop}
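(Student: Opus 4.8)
The plan is to reduce the statement to a direct application of the Kalman recursions of Theorem~\ref{Kalman_update}, exploiting the structural form of $q_\theta^*$ already obtained in Proposition~\ref{var::prop_qtheta}. Concretely, I would first recall that Proposition~\ref{var::prop_qtheta} shows that $\ln q_\theta^*(\boldsymbol{\theta}_{1:t})$ coincides, up to an additive constant, with the joint log-density of a linear Gaussian state-space model whose observation noise at time $s$ has variance $\overline{\sigma}_s^2$ and whose state noise has covariance $\overline{Q}_{s-1}$, the prior being fixed by the conventions $\boldsymbol{\theta}_0=\boldsymbol{\hat{\theta}}_1$ and $\overline{Q}_0=P_1$. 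Thus $q_\theta^*$ is, up to normalization, exactly the joint posterior of an ordinary LGSSM with deterministic, time-varying variances $(\overline{\sigma}_s^2,\overline{Q}_s)_s$; the only difference with the model of Equations~\eqref{LGSSM::var::eq_obs}--\eqref{LGSSM::var::eq_state} is that the constant variances $\sigma_s^2,Q_s$ are replaced by these averaged quantities.

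Next I would identify the filtering marginal $q_\theta^F(\boldsymbol{\theta}_s)$ with the posterior of this LGSSM given only $\boldsymbol{x}_{1:s},y_{1:s}$, and argue by induction on $s$ that it is Gaussian. Assuming $q_\theta^F(\boldsymbol{\theta}_{s-1})=\N(\boldsymbol{\theta}_{s-1}\mid\boldsymbol{\hat{\theta}}_{s-1|s-1},P_{s-1|s-1})$, the linear state equation propagates this into the Gaussian prediction $\N(\boldsymbol{\theta}_s\mid\boldsymbol{\hat{\theta}}_{s|s-1},P_{s|s-1})$ with $P_{s|s-1}=P_{s-1|s-1}+\overline{Q}_{s-1}$ and $\boldsymbol{\hat{\theta}}_{s|s-1}=\boldsymbol{\hat{\theta}}_{s-1|s-1}$, and conjugacy of the Gaussian likelihood $\N(y_s\mid\boldsymbol{\theta}_s^T\boldsymbol{x}_s,\overline{\sigma}_s^2)$ then yields a Gaussian posterior $\N(\boldsymbol{\theta}_s\mid\boldsymbol{\hat{\theta}}_{s|s},P_{s|s})$. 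Reading off these updates is precisely Theorem~\ref{Kalman_update} applied verbatim with the substitutions $\sigma_s^2\mapsto\overline{\sigma}_s^2$ and $Q_{s-1}\mapsto\overline{Q}_{s-1}$, which reproduces the four displayed equations.

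There is essentially no hard step here: the whole content is the observation, established in Proposition~\ref{var::prop_qtheta}, that the E-$\theta$ objective is the log-likelihood of an LGSSM with averaged variances, so that the exact inference machinery of the Kalman filter applies unchanged. The only point requiring mild care is to justify that forward filtering conditioned on $\boldsymbol{x}_{1:s},y_{1:s}$ does recover the correct marginal of the global smoothing density $q_\theta^*$; this holds because the Markov structure of the state equation decouples the past from the future, so the backward (smoothing) contribution does not affect the filtering marginal, exactly as in the classical derivation of the Kalman filter from the joint Gaussian density.
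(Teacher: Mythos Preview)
Your approach is exactly the paper's: invoke Proposition~\ref{var::prop_qtheta} to recognize $q_\theta^*$ as the posterior of an LGSSM with variances $(\overline{\sigma}_s^2,\overline{Q}_{s-1})$, then read off the filtering recursions from Theorem~\ref{Kalman_update}. The first two paragraphs are correct and complete.

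One caveat: the claim in your final paragraph that ``forward filtering \dots\ does recover the correct marginal of the global smoothing density $q_\theta^*$'' is false and, fortunately, unnecessary. For $s<t$ the marginal of $q_\theta^*(\boldsymbol{\theta}_{1:t})$ at $\boldsymbol{\theta}_s$ is the \emph{smoothing} density (it depends on $y_{s+1:t}$ through the future factors in \eqref{var::eq_ln_qtheta}), which in general differs from the filtering density $q_\theta^F(\boldsymbol{\theta}_s)$. The proposition only asserts the form of the filtering approximation, so you should simply drop that justification rather than try to repair it.
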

\subsection{E-$Z$ step}
\begin{prop}(E-$Z$ step)
\label{E_Z step}
~~\\
For some $t > 1$, for any $q_\theta(\theta_{1:t})$,
the distribution $q_Z^*$ which minimizes $KL(q_\theta \times q_Z || \ p_{\F_t})$ for $q_\theta$ fixed writes:
\begin{align*}
    q_Z^*(Z_{1:t}) &\propto  \pi(Z_1)  \mathcal{N}(\boldsymbol{\mu}_1|\boldsymbol{\hat{\theta}}_1,P_1) 
    \mathcal{N}(y_1|\boldsymbol{\mu}_1^T\boldsymbol{x}_1,\sigma_{Z_1}^2)\exp\left(-\frac{1}{2}\text{Tr}\left(\left[P_1^{-1}+\frac{\boldsymbol{x}_1\boldsymbol{x}_1^T}{\sigma_{Z_1}^2}\right]V_1\right)   \right)\\
& \prod_{s=2}^t\Bigg[ \mathcal{N}(y_s|\boldsymbol{\mu}_s^T\boldsymbol{x}_s,\sigma_{Z_s}^2)     \mathcal{N}(\boldsymbol{\mu}_s|\boldsymbol{\mu}_{s-1},Q_{Z_{s-1}})\exp\left(- \frac{1}{2}\text{Tr}\left[Q_{Z_{s-1}}^{-1}(V_s+ V_{s-1}-2W_{s-1})\right]\right)M_{Z_s|Z_{s-1}} \Bigg]
\end{align*}
where, for $s \in [\![1,t]\!]$  :
    $\boldsymbol{\mu}_s = \mathbb{E}_{q_\theta(\boldsymbol{\theta}_{1:t})}[\boldsymbol{\theta}_s],
    V_s = \mathbb{E}_{q_\theta(\boldsymbol{\theta}_{1:t})}[\boldsymbol{\theta}_s\boldsymbol{\theta}_s^T] - \boldsymbol{\mu}_s\boldsymbol{\mu}_s^T$ and $
    W_s = \mathbb{E}_{q_\theta(\boldsymbol{\theta}_{1:t})}[\boldsymbol{\theta}_s\boldsymbol{\theta}_{s+1}^T] - \boldsymbol{\mu}_s\boldsymbol{\mu}_{s+1}^T$
which can be rewritten as:
\begin{equation}
    q_Z(Z_{1:t}) = \rho_1(Z_1) \prod_{s=2}^t \rho_s(Z_s)\gamma_{Z_s|Z_{s-1}}
    \label{var::eq_HMM}
\end{equation}
where for $s > 1$: 
\begin{align*}
    \rho_1(Z_1) &\propto \pi(Z_1) \mathcal{N}(\boldsymbol{\mu}_1|\boldsymbol{\hat{\theta}}_1,P_1) 
    \mathcal{N}(y_1|\boldsymbol{\mu}_1^T\boldsymbol{x}_1,\sigma_{Z_1}^2)   \exp\left(-\frac{1}{2}\text{Tr}\left(\left[P_1^{-1}+\frac{\boldsymbol{x}_1\boldsymbol{x}_1^T}{\sigma_{Z_1}^2}\right]V_1\right)   \right) \\
    \rho_s(Z_s) &\propto \mathcal{N}(y_s|\boldsymbol{\mu}_s^T\boldsymbol{x}_s,\sigma_{Z_s}^2)\exp\left(-\frac{1}{2}\text{Tr}\left(\frac{\boldsymbol{x}_s\boldsymbol{x}_s^T}{\sigma_{Z_s}^2}V_s\right)\right)\\
    \gamma_{Z_s|Z_{s-1}}&\propto \mathcal{N}(\boldsymbol{\mu}_s|\boldsymbol{\mu}_{s-1},Q_{Z_{s-1}}) \exp\left(- \frac{1}{2}\text{Tr}\left[Q_{Z_{s-1}}^{-1}(V_s+ V_{s-1}-2W_{s-1})\right]\right)\tau_{Z_s|Z_{s-1}}
\end{align*}
  \label{var::prop_qZ}  
\end{prop}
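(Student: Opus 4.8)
The plan is to start from the variational fixed-point identity \eqref{var::EZ}, namely $\ln q_Z^*(Z_{1:t}) = \E_{q_\theta(\boldsymbol{\theta}_{1:t})}[\ln p_{\F_t}] + c_2$, and substitute the explicit factorization of the joint posterior recalled just before the proposition. Taking logarithms, $\ln p_{\F_t}$ splits into the prior term $\ln \pi(Z_1)$, the initial-state term $\ln \N(\boldsymbol{\theta}_1|\boldsymbol{\hat\theta}_1,P_1)$, the observation terms $\sum_{s=1}^t \ln \N(y_s|\boldsymbol{\theta}_s^T\boldsymbol{x}_s,\sigma_{Z_s}^2)$, the transition terms $\sum_{s=1}^{t-1}\ln \N(\boldsymbol{\theta}_{s+1}|\boldsymbol{\theta}_s,Q_{Z_s})$, and the Markov terms $\sum_{s=1}^{t-1}\ln M_{Z_{s+1},Z_s}$, up to an additive constant. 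Since the expectation is taken only over $q_\theta$, the purely discrete factors $\ln\pi(Z_1)$ and $\ln M_{Z_{s+1},Z_s}$ pass through untouched, and the work reduces to computing the $q_\theta$-expectation of each Gaussian log-density.

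First I would record the two elementary moment identities under $q_\theta$: for the quadratic in an observation term, $\E_{q_\theta}[(y_s-\boldsymbol{\theta}_s^T\boldsymbol{x}_s)^2] = (y_s-\boldsymbol{\mu}_s^T\boldsymbol{x}_s)^2 + \boldsymbol{x}_s^T V_s \boldsymbol{x}_s$, and for a general quadratic form $\E_{q_\theta}[\boldsymbol{\theta}^T A\boldsymbol{\theta}] = \boldsymbol{\mu}^T A\boldsymbol{\mu} + \text{Tr}(AV)$. Applying the first to $\ln\N(y_s|\boldsymbol{\theta}_s^T\boldsymbol{x}_s,\sigma_{Z_s}^2)$ reproduces $\ln\N(y_s|\boldsymbol{\mu}_s^T\boldsymbol{x}_s,\sigma_{Z_s}^2)$ plus the correction $-\tfrac12\text{Tr}(\sigma_{Z_s}^{-2}\boldsymbol{x}_s\boldsymbol{x}_s^T V_s)$, which is exactly the emission potential $\rho_s$ for $s\ge 2$. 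The $s=1$ term additionally absorbs $\E_{q_\theta}[\ln\N(\boldsymbol{\theta}_1|\boldsymbol{\hat\theta}_1,P_1)]$, whose trace correction $-\tfrac12\text{Tr}(P_1^{-1}V_1)$ combines with the emission correction to give the combined bracket $[P_1^{-1}+\sigma_{Z_1}^{-2}\boldsymbol{x}_1\boldsymbol{x}_1^T]$ appearing in $\rho_1$.

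The delicate step is the transition term $\E_{q_\theta}[\ln\N(\boldsymbol{\theta}_s|\boldsymbol{\theta}_{s-1},Q_{Z_{s-1}})]$, since it couples two consecutive latent states and therefore requires the joint second moments, not just the marginals. Writing the exponent as a quadratic form in the increment $\boldsymbol{\theta}_s-\boldsymbol{\theta}_{s-1}$ and applying the second identity yields the mean part $\ln\N(\boldsymbol{\mu}_s|\boldsymbol{\mu}_{s-1},Q_{Z_{s-1}})$ plus $-\tfrac12\text{Tr}[Q_{Z_{s-1}}^{-1}(V_s+V_{s-1}-W_{s-1}-W_{s-1}^T)]$; here one uses the symmetry of $Q_{Z_{s-1}}^{-1}$ together with the cyclic property of the trace to collapse $\text{Tr}(Q^{-1}W_{s-1}^T)=\text{Tr}(Q^{-1}W_{s-1})$, producing the symmetric correction $V_s+V_{s-1}-2W_{s-1}$ that defines $\gamma_{Z_s|Z_{s-1}}$. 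I expect this bookkeeping — carrying the cross-covariance $W$ and verifying the index alignment so that each factor depends on the correct state (emission through $\sigma_{Z_s}$, transition through $Q_{Z_{s-1}}$) — to be the only real obstacle; the remainder is assembling the exponentiated sum into the product \eqref{var::eq_HMM}, whereupon the factorization into $\rho_1\prod_{s\ge2}\rho_s\gamma_{Z_s|Z_{s-1}}$ is immediate and exhibits $q_Z^*$ as a hidden Markov chain amenable to forward--backward smoothing.
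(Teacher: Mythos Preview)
Your argument is correct. The paper does not actually supply a proof for this proposition --- it is stated and then immediately used --- so there is nothing to compare against line by line. Your derivation is precisely the natural one and mirrors the proof the paper \emph{does} give for the companion E-$\theta$ step (Proposition~\ref{var::prop_qtheta}): start from the variational fixed-point identity~\eqref{var::EZ}, expand $\ln p_{\mathcal F_t}$ into its factors, and compute the $q_\theta$-expectation of each Gaussian log-density via the second-moment identities $\E_{q_\theta}[\boldsymbol\theta^TA\boldsymbol\theta]=\boldsymbol\mu^TA\boldsymbol\mu+\text{Tr}(AV)$. Your handling of the cross term --- carrying $W_{s-1}$ and $W_{s-1}^T$ and collapsing them via $\text{Tr}(Q^{-1}W^T)=\text{Tr}(Q^{-1}W)$ using the symmetry of $Q^{-1}$ --- is the only place one could slip, and you have it right. (Incidentally, the $\tau_{Z_s|Z_{s-1}}$ appearing in the paper's definition of $\gamma$ is evidently a typo for $M_{Z_s|Z_{s-1}}$, which is what your derivation produces.)
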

This last expression (\ref{var::eq_HMM}) is equivalent to a \textbf{Hidden Markov Model (HMM)}. We define the filtration $\mathcal{H}_s = \sigma(x_{1:s},y_{1:s},\mu_{1:s},V_{1:s},W_{1:s-1})$ and $p'$ such that $q_Z(Z_{1:t}) = p'(Z_{1:t}|H_t)$. Therefore, by computing the standard forward-backward algorithm for HMMs, one can easily obtain the forward $q_Z^F(Z_s) = p' (Z_s|\mathcal{H}_s) \approx p(Z_s|\F_s)$ and backward $q_Z^B(Z_s)=p'(Z_s|\mathcal{H}_{s+1:t}) \approx p(Z_s|\boldsymbol{x}_{s+1:t},y_{s+1:t})$ and obtain the posterior approximation $q_Z(Z_s) \approx p(Z_s|\F_t)$ :
\begin{equation}
    q_Z(Z_s) \propto q_Z^F(Z_s)q_Z^B(Z_s)
    \label{var::eq_fb}
\end{equation}
\begin{prop}
    The forward-backward recursions write for some $t>1$ and $s \in [\![1,t]\!]$ :
    \begin{equation}
        q_Z^F(Z_s) \propto \rho_s(Z_s)   \odot (q_Z^F(Z_{s-1}) \gamma_{Z_s|Z_{s-1}})
    \end{equation}
    \begin{equation}
        q_Z^B(Z_s) \propto \gamma_{Z_{s+1}|Z_{s}} \left(q_Z^B(Z_{s+1}) \odot \rho_{s+1}(Z_{s+1})\right)
    \end{equation}
    \label{var::prop_fb}
\end{prop}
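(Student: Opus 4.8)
The plan is to recognize the factorization (\ref{var::eq_HMM}) as the joint law of a time-inhomogeneous hidden Markov model, in which $\rho_s(Z_s)$ plays the role of a local evidence factor and $\gamma_{Z_s|Z_{s-1}}$ the role of a transition factor, and then to carry out the classical forward--backward message-passing derivation. The only conceptual care needed is that the messages below are \emph{unnormalized}, so the recursions are asserted only up to the constant that the symbol $\propto$ absorbs, and that the factors with index up to $s$ are $\mathcal{H}_s$-measurable whereas those with larger index are not.

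First I would introduce the unnormalized forward message
$$F_s(Z_s) := \sum_{Z_{1:s-1}} \rho_1(Z_1)\prod_{r=2}^s \rho_r(Z_r)\,\gamma_{Z_r|Z_{r-1}}.$$
Since every factor in (\ref{var::eq_HMM}) of index $\le s$ is $\mathcal{H}_s$-measurable, marginalizing the joint over $Z_{s+1:t}$ and renormalizing gives $q_Z^F(Z_s)=p'(Z_s\mid\mathcal{H}_s)\propto F_s(Z_s)$. The key algebraic step is to peel off the last factor: the summand splits as $\rho_s(Z_s)\,\gamma_{Z_s|Z_{s-1}}$ times the product up to index $s-1$, so summing first over $Z_{1:s-2}$ recovers $F_{s-1}(Z_{s-1})$ and leaves $F_s(Z_s)=\rho_s(Z_s)\sum_{Z_{s-1}}\gamma_{Z_s|Z_{s-1}}F_{s-1}(Z_{s-1})$. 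Reading the inner sum as the matrix--vector product $q_Z^F(Z_{s-1})\,\gamma_{Z_s|Z_{s-1}}$ (a function of $Z_s$) and the multiplication by $\rho_s$ as the Hadamard product $\odot$ yields the stated forward recursion.

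The backward recursion I would obtain symmetrically, setting
$$B_s(Z_s):=\sum_{Z_{s+1:t}}\prod_{r=s+1}^t \rho_r(Z_r)\,\gamma_{Z_r|Z_{r-1}}\ \propto\ q_Z^B(Z_s),$$
factoring out the index-$(s+1)$ terms $\gamma_{Z_{s+1}|Z_s}\,\rho_{s+1}(Z_{s+1})$ and recognizing the remaining sum as $B_{s+1}(Z_{s+1})$, which gives $B_s(Z_s)\propto\sum_{Z_{s+1}}\gamma_{Z_{s+1}|Z_s}\big(\rho_{s+1}(Z_{s+1})\,B_{s+1}(Z_{s+1})\big)$, i.e. the stated recursion once the inner sum is read as the matrix--vector product. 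Finally, splitting the full product (\ref{var::eq_HMM}) at time $s$ into its index-$\le s$ and index-$>s$ parts and marginalizing recovers the smoothing identity (\ref{var::eq_fb}), $q_Z(Z_s)\propto q_Z^F(Z_s)\,q_Z^B(Z_s)$.

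I do not expect a genuine obstacle here, since the content is exactly the standard forward--backward algorithm; the work is entirely bookkeeping. The two points requiring attention are matching the Hadamard/matrix notation of the statement to the scalar sums over $Z_{s\pm1}$, and justifying via $\mathcal{H}_s$-measurability that the partial sums $F_s,B_s$ coincide (up to normalization) with the conditional laws $q_Z^F(Z_s)=p'(Z_s\mid\mathcal{H}_s)$ and $q_Z^B(Z_s)=p'(Z_s\mid\mathcal{H}_{s+1:t})$ rather than with full posteriors.
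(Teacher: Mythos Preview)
Your proposal is correct and follows essentially the same approach as the paper: both derive the recursions by the standard forward--backward argument, marginalizing out $Z_{s-1}$ (resp.\ $Z_{s+1}$) from the factored joint \eqref{var::eq_HMM} to expose $\rho_s$ and $\gamma$ and recover the message at the previous step. The only cosmetic difference is that the paper carries out the computation in terms of the joint and conditional densities $\alpha_s(Z_s)=p'(Z_s,\mathcal{H}_s)$ and $p'(\mathcal{H}_{s+1:t}\mid Z_s)$ via the chain rule, whereas you work directly with the unnormalized sums $F_s,B_s$ over the abstract factor product; the algebra is identical.
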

\begin{proof}

Starting from the joint distribution we have :
\begin{align*}
    \alpha_s(Z_s)&= p(Z_s,x_{1:s},y_{1:s},\mu_{1:s},V_{1:s},W_{1:s-1}) \\
    &=\sum_{k = 1}^K p(Z_s,Z_{s-1}=k,y_{1:s},x_{1:s},\mu_{1:s},V_{1:s},W_{1:s-1}) \qquad \qquad \text{(marginalizing over $Z_{s-1}$)}\\
    &= \sum_{k = 1}^K p(y_s|Z_s,Z_{s-1}=k,y_{1:s-1},x_{1:s},\mu_{1:s},V_{1:s},W_{1:s-1}) p(Z_s,Z_{s-1}=k,y_{1:s-1},x_{1:s},\mu_{1:s},V_{1:s},W_{1:s-1})\\
    &= \sum_{k = 1}^Kp(y_s|Z_s,Z_{s-1}=k, x_{s},\mu_s,V_s,W_{s-1}) p(Z_s|Z_{s-1}=k) \\
    & p(x_s|Z_{s-1}=k,x_{1:s-1},y_{1:s-1},\mu_{1:s-1},V_{1:s-1},W_{1:s-2})
   p(Z_{s-1}=k,x_{1:s-1},y_{1:s-1},\mu_{1:s-1},V_{1:s-1},W_{1:s-2})\\
    &\propto \sum_{k=1}^K \rho_s(Z_s)\gamma_{Z_s|Z_{s-1}= k}\alpha_{s-1}(Z_{s-1}=k)\\
    &\propto  \rho_s(Z_s) \odot \left(\alpha_{s-1}(Z_{s-1}) \gamma_{Z_s|Z_{s-1}}\right) \\
\Longrightarrow    q_Z^F(Z_s) &= p(Z_s|x_{1:s},y_{1:s})\\
    &\propto \rho_s(Z_s)   \odot (q_Z^F(Z_{s-1}) \gamma_{Z_s|Z_{s-1}})\\
    \\
   q_Z^B(Z_s) &= p(y_{s+1:t},x_{s+1:t}|Z_s)\\ 
   &= \sum_{k=1}^K p(y_{s+1:t},x_{s+1:t},Z_{s+1}=k|Z_s) \qquad \qquad \text{(marginalizing over $Z_{s+1}$)}\\
   &= \sum_{k=1}^K p(y_{s+1}|x_{s+1:t},y_{s+2:t},Z_{s+1}=k, Z_s)p(x_{s+1:t},y_{s+2:t},Z_{s+1}=k, Z_s)\\
   &=\sum_{k=1}^K p(y_{s+1}|x_{s+1},Z_{s+1}=k, Z_s)p(Z_{s+1}=k|Z_s,x_{s+1:t},y_{s+2:t}) p(x_{s+1:t},y_{s+2:t},Z_{s+1}=k)\\
   &\propto \sum_{k=1}^K  \rho_{s+1}(Z_{s+1}=k)
  q_Z^B(Z_{s+1}=k) \gamma_{Z_{s+1}=k|Z_s}\qquad \qquad \text{(à $p(x_{s+1}|.)$ près)}\\
    &\propto \gamma_{Z_{s+1}|Z_{s}} (q_Z^B(Z_{s+1}) \odot \rho_{s+1}(Z_{s+1}))
\end{align*}
\end{proof}
\subsection{VB Algorithm}
Finally the VB algorithm can writes as follow. The parameter $n_{iter}$ corresponds to the number of iteration we need to converge and usually is of order $5$. 
\begin{algorithm}
\caption{VB}\label{alg:cap}
\begin{algorithmic}
\State \textbf{Require:}  $q_{Z}^{(0)}, (\boldsymbol{\mu_s^{(0)}})_{s=1}^t,(V_s^{(0)})_{s=1}^t,(W_s^{(0)})_{s=1}^{t-1} (\boldsymbol{x}_s)_{s=1}^t,(y_s)_{s=1}^t$ 
\For{$k\in [\![1,n_{iter}]\!]$}:
\State 
 For {$s \in [\![1,t]\!]$}, compute $\bar\sigma_s,\bar{Q}_{s-1}$. 
 \State For {$s \in [\![1,t]\!]$}, derive $q_\theta^F(\boldsymbol{\theta}_s)$ with Proposition \ref{E_theta}.\Comment{E-$\theta$ step}
\State For {$s \in [\![1,t]\!]$} compute $\boldsymbol{\mu}_s,V_s,W_s$. 
\State  For {$s \in [\![1,t]\!]$}, derive $q_Z(\boldsymbol{\theta}_s)\propto q_Z^F(Z_s)q_Z^B(Z_s)$ with Proposition \ref{E_Z step}.  \Comment{E-$Z$ step}
\EndFor
\end{algorithmic}
\end{algorithm}

\end{document}